\newcommand\vldbpagestyle{plain}
\theoremstyle{definition}
\newtheorem{definition}{Definition}
\newtheorem{theorem}{Theorem}[section]
\begin{document}
\title{Revisiting CNNs for Trajectory Similarity Learning}

%%
%% The "author" command and its associated commands are used to define the authors and their affiliations.

\author{Zhihao Chang}
\affiliation{%
  \institution{Zhejiang University, China}
}
\email{changzhihao@zju.edu.cn}

\author{Linzhu Yu}
\affiliation{%
  \institution{Zhejiang University, China}
}
\email{linzhu@zju.edu.cn}

\author{Huan Li}
\affiliation{%
  \institution{Zhejiang University, China}
}
\email{lihuan.cs@zju.edu.cn}

\author{Sai Wu}
\affiliation{%
  \institution{Zhejiang University, China}
}
\email{wusai@zju.edu.cn}

\author{Gang Chen}
\affiliation{%
  \institution{Zhejiang University, China}
}
\email{cg@zju.edu.cn}

\author{Dongxiang Zhang}
\affiliation{%
  \institution{Zhejiang University, China}
}
\email{zhangdongxiang@zju.edu.cn}

%%
%% The abstract is a short summary of the work to be presented in the
%% article.
\begin{abstract}

Similarity search is a fundamental but expensive operator in querying trajectory data, due to its quadratic complexity of distance computation. To mitigate the computational burden for long trajectories, neural networks have been widely employed for similarity learning and each trajectory is encoded as a high-dimensional vector for similarity search with linear complexity. Given the sequential nature of trajectory data, previous efforts have been primarily devoted to the utilization of RNNs or Transformers.

In this paper, we argue that the common practice of treating trajectory as sequential data results in excessive attention to capturing long-term global dependency between two sequences. Instead, our investigation reveals the pivotal role of local similarity, prompting a revisit of simple CNNs for trajectory similarity learning. We introduce ConvTraj, incorporating both 1D and 2D convolutions to capture sequential and geo-distribution features of trajectories, respectively. In addition, we conduct a series of theoretical analyses to justify the effectiveness of ConvTraj. Experimental results on four real-world large-scale datasets demonstrate that ConvTraj achieves state-of-the-art accuracy in trajectory similarity search. Owing to the simple network structure of ConvTraj, the training and inference speed on the Porto dataset with 1.6 million trajectories are increased by at least $240$x and $2.16$x, respectively. The source code and dataset can be found at \textit{\url{https://github.com/Proudc/ConvTraj}}.
\end{abstract}

\maketitle

%%% do not modify the following VLDB block %%
%%% VLDB block start %%%

\pagestyle{\vldbpagestyle}
% \begingroup\small\noindent\raggedright\textbf{PVLDB Reference Format:}\\
% \vldbauthors. \vldbtitle. PVLDB, \vldbvolume(\vldbissue): \vldbpages, \vldbyear.\\
% \href{https://doi.org/\vldbdoi}{doi:\vldbdoi}
% \endgroup
% \begingroup
% \renewcommand\thefootnote{}\footnote{\noindent
% This work is licensed under the Creative Commons BY-NC-ND 4.0 International License. Visit \url{https://creativecommons.org/licenses/by-nc-nd/4.0/} to view a copy of this license. For any use beyond those covered by this license, obtain permission by emailing \href{mailto:info@vldb.org}{info@vldb.org}. Copyright is held by the owner/author(s). Publication rights licensed to the VLDB Endowment. \\
% \raggedright Proceedings of the VLDB Endowment, Vol. \vldbvolume, No. \vldbissue\ %
% ISSN 2150-8097. \\
% \href{https://doi.org/\vldbdoi}{doi:\vldbdoi} \\
% }\addtocounter{footnote}{-1}\endgroup
% %%% VLDB block end %%%

% %%% do not modify the following VLDB block %%
% %%% VLDB block start %%%
% \ifdefempty{\vldbavailabilityurl}{}{
% \vspace{.3cm}
% \begingroup\small\noindent\raggedright\textbf{PVLDB Artifact Availability:}\\
% The source code and dataset have been made available at \url{https://github.com/Proudc/ConvTraj}.
% \endgroup
% }

%%% VLDB block end %%%

\section{Introduction}
% 轨迹相似性是轨迹分析相关任务的重要基础，过去一段时间，许多的函数被提出来了，包括XXX。
Trajectory similarity plays a fundamental role in numerous trajectory analysis tasks. Numerous distance measures, such as Discrete Frechet Distance (DFD)~\cite{DBLP:journals/ijcga/AltG95}, the Hausdorff distance~\cite{DBLP:journals/tits/AtevMP10}, Dynamic Time Warping (DTW)~\cite{DBLP:conf/icde/YiJF98}, and Edit Distance on Real sequence (EDR)~\cite{DBLP:conf/sigmod/ChenOO05}, have been proposed and employed in a wide spectrum of applications, including but not limited to trajectory clustering~\cite{DBLP:conf/www/ChanGS18,DBLP:conf/pods/AgarwalFMNPT18}, anomaly detection~\cite{DBLP:journals/tkde/ZhangCWYTC22,DBLP:journals/pami/LaxhammarF14}, and similar retrieval~\cite{DBLP:journals/pvldb/XieLP17,DBLP:conf/sigmod/Shang0B18}.

% 但是这种方法的缺点是二次复杂度，虽然有一些方法可以缓解，但是缺点是：针对某种距离函数单独设计的，不够通用
Generally speaking, these distance measures involve the optimal point-wise alignment between two trajectories. The distance calculation often relies on dynamic programming and incurs quadratic computational complexity. This limitation poses a significant constraint, particularly when confronted with large-scale datasets with long trajectories. In recent years, trajectory similarity learning has emerged as the mainstream approach to mitigate the computational burden. The main idea is to encode each trajectory sequence $T_i$ into a high-dimensional vector $V_i$ such that the real distance between $T_1$ and $T_2$ can be approximated by the distances between their derived vectors $V_1$ and $V_2$. Consequently, the complexity of distance calculation can be reduced from quadratic to linear.

\begin{figure}[htb]
  \centering
  \includegraphics[width=\linewidth]{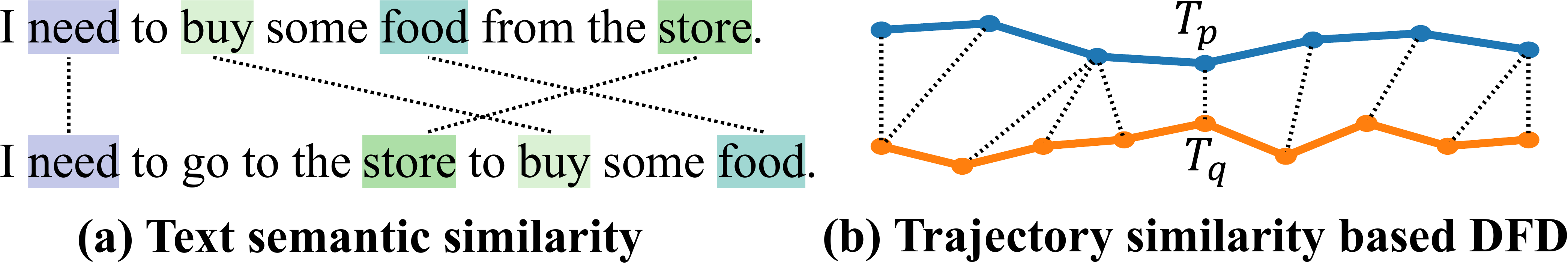}
  \caption{Texts feature intercrossed matching pairs, whereas trajectories do not.}
  % \caption{Texts feature crossed pairs, but trajectories do not.}
  \label{fig:alignment}
\end{figure}

Given the sequential nature of trajectory data, existing methods for trajectory similarity learning can be categorized into RNN-based or Transformer-based. RNN-based methods, including  NeuTraj~\cite{DBLP:conf/icde/YaoCZB19}, Traj2SimVec~\cite{DBLP:conf/ijcai/ZhangZJZSSW20}, and T3S~\cite{DBLP:conf/icde/YangW0Q0021},  employ RNN or its variants (e.g, GRU~\cite{DBLP:conf/emnlp/ChoMGBBSB14}, LSTM~\cite{DBLP:journals/neco/HochreiterS97}) as the core encoder, which can be augmented with additional components such as spatial attention memory in NeuTraj and point or structure matching mechanisms in Traj2SimVec and T3S to enhance performance. Due to the success of Transformer in NLP, 
TrajGAT~\cite{DBLP:conf/kdd/YaoHDCHB22} and TrajCL~\cite{DBLP:conf/icde/Chang0LT23} adopt Transformer to learn trajectory embedding, which can effectively capture the long-term dependency of sequences. 
\begin{table*}
\caption{Performance of Transformer with different attention window sizes. We report the hit rates for two measures: DFD and DTW. The dataset includes 6000 items selected from Porto, with 3000 for training, 1000 for query, and 2000 as the candidate set.}
\label{table:motivating-example}
\begin{threeparttable}
\resizebox{0.9\textwidth}{!}{%
\begin{tabular}{c|c|c|c|cccc|cccc}
\toprule
& & & & \multicolumn{4}{c|}{\textbf{DFD}} & \multicolumn{4}{c}{\textbf{DTW}} \\
Method&  \# Paras &\makecell{(Train time Per\\ Epoch) * \# Epochs} &\makecell{Inference\\time}  & HR@1 & HR@5 & HR@10 & HR@50  & HR@1 & HR@5 & HR@10 & HR@50 \\ \midrule
global attention& 3.38M & 17.28s * 1000 & 3.58s &   $22.10$   &  $32.58 $    & $39.11 $      &   $50.11 $    &   $29.40 $   &   $46.22 $   &     $54.60 $  &   $63.41 $         \\  
local attention ($w = 10$) & 3.38M & 17.28s * 1000 & 3.58s&   $\textbf{23.20} $   &  $\textbf{36.74} $    & $\textbf{42.80} $      &   $\textbf{54.70} $    &      $31.50 $   &   $\textbf{48.60} $   &     $\textbf{54.92} $  &   $\textbf{65.06} $     \\
local attention ($w = 5$)& 3.38M & 17.28s * 1000 & 3.58s &   $21.80 $   &  $35.40 $    & $41.83 $      &   $54.42 $    &      $\textbf{33.60} $   &   $46.72 $   &     $52.34 $  &   $63.03 $ \\ 
\hline
1D CNN & 0.17M & 1.03s * 200 & 0.16s&   $\textbf{33.23} $  &  $\textbf{43.94} $    & $\textbf{50.84} $      &   $\textbf{64.78} $    &   $30.90 $   &   $46.66 $   &     $53.36 $  &   $\textbf{65.14} $         \\ 
\bottomrule
\end{tabular}%
}
% \begin{tablenotes}
%     \small \item[1]\url{https://www.kaggle.com/competitions/pkdd-15-predict-taxi-service-trajectory-i/data}
% \end{tablenotes}
\end{threeparttable}
\end{table*}

However, we argue that these common practices pay excessive attention to capturing long-term global dependency between two trajectories while ignoring point-wise similarity, which may potentially yield adverse effects. Instead, we should pay more attention to point-wise similarity in the local context. In support of this argument, we conducted an experiment on Porto\footnote{https://www.kaggle.com/competitions/pkdd-15-predict-taxi-service-trajectory-i/data} dataset to evaluate the effect of applying Transformer for trajectory encoding with different sizes of attention windows. The first variant is the original Transformer with global attention, where each token engages in self-attention by querying all other tokens. We also implemented two alternative variants with local attention, in which each token only queries its neighbors within a window of count $w$, i.e., the attention weights outside the window have been masked. 
We can observe from \autoref{table:motivating-example} that local attention has great potential to 
significantly outperform global attention. 
We explain that existing trajectory distance measurements are alignment-based and the edges for matching pairs are not intercrossed (as shown in \autoref{fig:alignment}). This property differs significantly from handling text data in NLP.

These observations reveal the pivotal role of \textit{local similarity}. Instead of adopting Transformer with masked local attention, we are interested in revisiting CNNs in the task of trajectory similarity learning. The reason is that CNNs can also well capture local similarity while offering the advantages of simplicity. As shown in ~\autoref{table:motivating-example}, with only 5\% of the parameters, a simple 1D CNN can remarkably outperform vanilla Transformers after convergence on the DFD. Although slightly lower than local attention on the DTW, 1D CNN has great advantages in efficiency. To further exploit the potential of CNNs, we present ConvTraj with two types of convolutions. We first use 1D convolution to capture the sequential features of trajectories. Then we represent the trajectory as a single-channel binary image and use 2D convolution to capture its geo-distribution. Finally, these features are fused as complementary clues to capture trajectory similarity. To justify the effectiveness of ConvTraj, we conduct a series of theoretical analyses. We prove that 1D convolution and 1D max-pooling can preserve effective distance bounds after embedding, and trajectories located in distant areas yield large distances via 2D convolution, all of which play an important role in trajectory similarity recognition.

% 实验，准确性+时间两方面介绍
We conducted extensive experiments to evaluate the performance of ConvTraj on four real-world datasets. Experimental results show that ConvTraj achieves state-of-the-art accuracy for similarity retrieval on four commonly used similarity measurements, including DFD, DTW, Hausdorff, and EDR. 
Furthermore, ConvTraj is at least $240$x faster in training speed and $2.16$x faster in inference speed, when compared with methods based on RNN and Transformer on the Porto dataset containing 1.6 million trajectories.

% 贡献1、提出了一套完全基于CNN编码的框架。2、做了一套分析，从理论和可视化两个方面证明卷积对于轨迹相似性计算的重要意义。3、实验。
Our contributions are summarized in the following:
\begin{itemize}
    \item We argue that trajectory similarity learning should pay more attention to local similarity.
    \item We present a simple and effective ConvTraj with two types of CNNs for trajectory similarity computation.
    \item We conduct some theoretical analysis to help justify why such a simple ConvTraj can perform well.
    \item Extensive experiments on four real-world large-scale datasets established the superiority of ConvTraj over state-of-the-art works in terms of accuracy and efficiency.
\end{itemize}

% The rest of the paper is organized as follows. Section 2 introduces the related work of trajectory similarity. Section 3 gives a formal definition of our research problem. Section 4 presents the overall pipeline and detailed network structure of ConvTraj. Section 5 conducts some analysis of ConvTraj. Section 6 provides experimental results about the effectiveness and efficiency of ConvTraj. The concluding remarks are given in Section 7.

\section{Related Work}
\subsection{Heuristic Trajectory Similarity Measures}
% 衡量两条轨迹之间相似性程度的方法通常是使用一些measures，包括DFD、DTW、EDR等，但是这种方法的缺点是时间复杂度高。解决这个问题的传统方法可以大致分为两类：传统方法近似轨迹相似性；基于索引及剪枝
Heuristic measures between trajectories are derived from the distance between matching point pairs, these measures fall into three categories: (1) \emph{Linear-based} methods ~\cite{DBLP:conf/fodo/AgrawalFS93,DBLP:conf/ssdbm/Chang0TMS21} only need scan trajectories once to calculate their similarity but may lead to sub-optimal point matches. (2) \emph{Dynamic programming-based} methods are proposed to tackle this issue, such as DTW~\cite{DBLP:conf/icde/YiJF98}, DFD~\cite{DBLP:journals/ijcga/AltG95}, and others~\cite{DBLP:conf/sigmod/ChenOO05, DBLP:conf/vldb/ChenN04, DBLP:conf/icde/VlachosGK02, DBLP:conf/icde/RanuPTDR15}.  
However, these measurements involve the optimal point-wise alignment between two trajectories without intercrossing between matching pairs and often incur quadratic complexity. Thus it poses significant challenges for similarity search from a large-scale dataset with long trajectories. (3) \emph{Enumeration-based} methods calculate all point-to-trajectory distance, i.e., the minimum distance between a point to any point on a trajectory, then aggregate it. For example, OWD~\cite{DBLP:journals/geoinformatica/LinS08} uses the average point-to-trajectory distance, while Hausdorff~\cite{DBLP:journals/tits/AtevMP10} uses the maximum.

\begin{figure*}
  \centering
  \includegraphics[width=\textwidth]{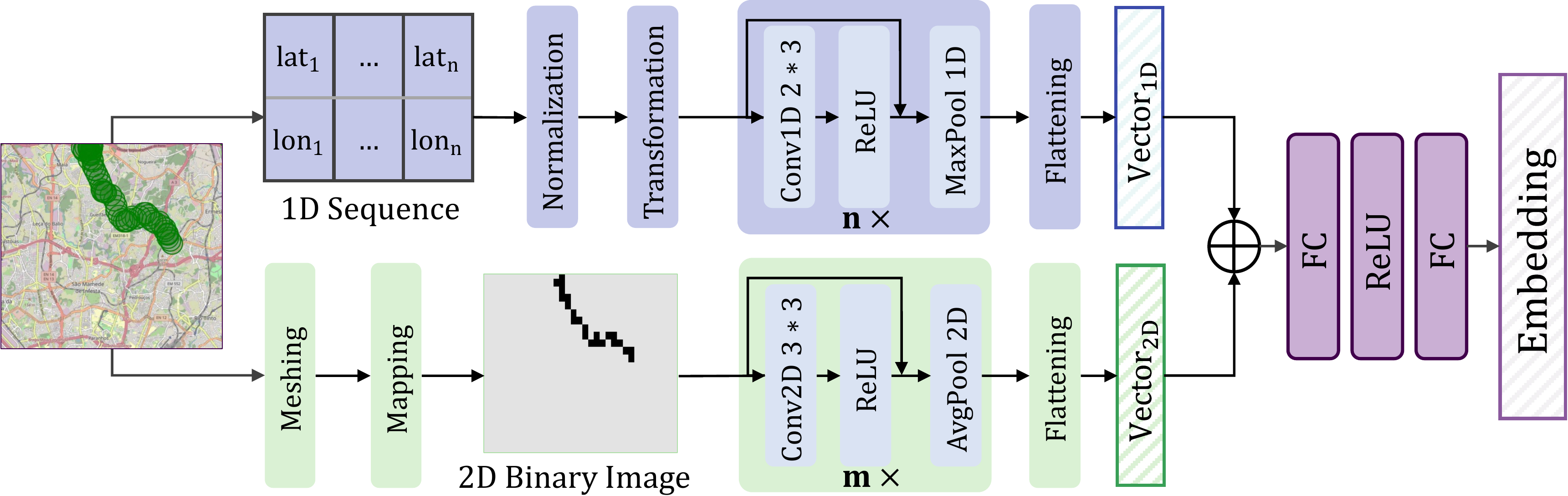}
  \caption{Input preprocessing and network structure of ConvTraj.}
  \label{fig:convtraj}
\end{figure*}

\subsection{Learning-based Trajectory Similarity}
% 基于深度学习的方法可以分为无监督、有监督的、基于路网的，有监督的方法通常是为了近似measures；无监督的方法是XXX
In recent years, the field of trajectory similarity has witnessed a paradigm shift, primarily fueled by the progress in deep representation learning. This advancement has led to the development of numerous methodologies aimed at encoding trajectories into embedding spaces. Broadly, these approaches can be classified into three categories:
(1) \emph{Learn a model to approximate a measurement}. The purpose of these methods is to learn a neural network so that the distance in the embedding space can approximate the true distance between trajectories. Early attempts were generally based on recurrent neural networks, including NeuTraj~\cite{DBLP:conf/icde/YaoCZB19}, Traj2SimVec~\cite{DBLP:conf/ijcai/ZhangZJZSSW20}, T3S~\cite{DBLP:conf/icde/YangW0Q0021}, and TMN~\cite{DBLP:conf/icde/YangWLZQZ22}. Subsequently, some studies tried to capture the long-term dependency of trajectories based on Transformer~\cite{DBLP:conf/kdd/YaoHDCHB22, DBLP:conf/icde/Chang0LT23}.
% 无监督深度学习方法
(2) \emph{No given measurements are required to generate training signals}. These methods encode trajectories without the need to generate supervised signals based on measurements. Its purpose is to overcome the limitations of traditional measures such as non-uniform sampling rates and noise. Based on the network they use, these methods can be divided into RNN-based methods, including traj2vec~\cite{DBLP:journals/es/YaoZZHWHB18}, t2vec~\cite{DBLP:conf/icde/LiZCJW18}, E2DTC~\cite{DBLP:conf/icde/FangDCHGC21}, etc., CNN-based method TrjSR~\cite{DBLP:conf/ijcnn/CaoTWWX21}, and Transformer-based method TrajCL~\cite{DBLP:conf/icde/Chang0LT23}.
% 基于路网的
(3) \emph{Road networks-based methods.} There have been some studies on trajectory similarity based on road networks~\cite{DBLP:conf/kdd/HanWYS021,DBLP:conf/kdd/Fang0ZHCGJ22,DBLP:journals/www/ZhouHYCZ23,DBLP:conf/edbt/ChangTC023,DBLP:conf/aaai/Zhou0WS023}. These works use graph neural networks to encode road segments. Since such works introduce relevant knowledge from road networks, we consider them as different research directions and will not delve into these methods. 

TrjSR~\cite{DBLP:conf/ijcnn/CaoTWWX21} is a well-known CNN-based method for trajectory similarity. It maps trajectories into 2D images and uses super-resolution techniques. However, TrjSR loses the sequential features of trajectories, making it unable to differentiate between two trajectories with the same path but opposite directions. Our ConvTraj uses both 1D and 2D convolutions as the backbone and achieves better results.

% \subsection{Convolutional Neural Networks}
% % 在图像各个任务上的卓越性能
% % 在文本数据、字符序列、时序数据等方面的应用
% % 在轨迹相似性方面的应用
% Convolutional neural networks (CNNs) are widely used in the field of Computer Vision (CV). Beginning with AlexNet~\cite{DBLP:conf/nips/KrizhevskySH12}, CNN-based network architecture achieves excellent performance on a variety of vision tasks, including image classification~\cite{DBLP:journals/corr/SimonyanZ14a, DBLP:conf/cvpr/HeZRS16, DBLP:journals/corr/HowardZCKWWAA17}, object detection~\cite{DBLP:conf/nips/RenHGS15,DBLP:conf/nips/DaiLHS16,DBLP:conf/cvpr/LinDGHHB17}, and semantic segmentation~\cite{DBLP:conf/cvpr/GirshickDDM14,DBLP:conf/iccv/HeGDG17}.
% In addition, CNNs are also used in other fields, such as text~\cite{DBLP:conf/emnlp/Kim14}, string sequences~\cite{DBLP:conf/sigir/DaiYZW0C20}, time-series data~\cite{7870510,DBLP:journals/pacmmod/000100J0023}, etc. In the field of trajectory, CNNs are often used for downstream tasks such as abnormal detection~\cite{DBLP:conf/itsc/ZhuGKP18,DBLP:conf/icdm/KimALZWW17} and trajectory data prediction~\cite{DBLP:conf/icann/KaratzoglouSB18,DBLP:conf/bigcomp/LvLSW18}, but it is not widely used in trajectory similarity. 

% \section{Preliminary}
\section{Problem Definition}
In this section, we present the definition of our research problem.

% \subsection{}
\begin{definition}[Trajectory]
A trajectory $T$ is a series of GPS points ordered based on timestamp $t$, and each point $p$ is a location in a two-dimensional geographic space containing latitude and longitude. Formally, a trajectory $T\in \mathbb{R}^{l \times 2}$ containing $l$ points can be expressed as $T = [p_1, p_2, p_3, ..., p_l]$, where $p_i=(p_i^{lat}, p_i^{lon})$ is the $i$-th location.
\end{definition}

\begin{definition}[Trajectory Measure Embedding]
Given a specific trajectory similarity measure $f(\cdot,\cdot)$, trajectory measure embedding aims to learn an approximate projection function $g$, such that for any pair of trajectories $T_i$ with $T_j$, the distance in the embedding space approximates the true distance between $T_i$ and $T_j$, i.e., $f(T_i, T_j) \approx d(g(T_i), g(T_j))$. Besides, the vectors in the embedding space should maintain the distance order of true distance, i.e., for any three trajectories $T_i$, $T_j$, and $T_k$, with $f(T_i, T_j) < f(T_i, T_k)$, we should ensure that $d(g(T_i), g(T_j)) < d(g(T_i), g(T_k))$. Here, $f(\cdot,\cdot)$ can be DFD, DTW, or any other measurements. At the same time, $d(\cdot,\cdot)$ is a measure between high-dimensional embedding vectors in the embedding space, such as Euclidean distance, Cosine distance, etc.

% In this paper, we evaluate four widely used measurements, including DFD, DTW, Hausdorff, and EDR, and the default similarity measurement we employ is the DFD.
\end{definition}

\section{Methodology}
% In this section, we introduce the methodology of ConvTraj. 

\subsection{Input Preprocessing}
Suppose there is a trajectory $T$ containing $l$ GPS points.
To process $T$ as the input of our ConvTraj, we perform the following two steps covering both one-dimensional and two-dimensional.

% 分为一维与二维不同的处理方式 
\textbf{One-dimensional Input.}
% 经纬度级联在一起、normalize、经过MLP
The input of our 1D convolution is a sequence, we thus treat the trajectory $T$ as a sequence with length $l$ and width 2 (i.e., latitude and longitude). For each point of $T$, we first normalize it using a min-max normalization function, and then apply a multi-layer perceptron (MLP) to perform a nonlinear transformation for each normalization point, thus the trajectory can be processed as a sequence $Seq_{1D}$.
% Assuming $p_i=(lat_i, lon_i)$ is a point of $T$, the processing steps of $p_i$ can be formalized as:
% \begin{gather}    
%     x_i, y_i = MinMax_{norm}(lat_i, lon_i), \\
%     p_i^{new} = MLP(x_i, y_i).
% \end{gather}

\textbf{Two-dimensional Input.}
% 空间网格化，有点的地方为1，没点的地方为0
The input of our 2D convolution is a binary image, we thus perform the following substeps to generate such an image for each trajectory. Initially, we determine a minimum bounding rectangle (MBR) within a two-dimensional space, encapsulating all points of the whole trajectory dataset. Subsequently, the MBR is partitioned into equal-sized grids based on a predetermined hyperparameter width $\delta$. Then for each trajectory $T$, its coordinates are mapped onto the grid, and each pixel within the grid cell is assigned a binary value, which is 1 if the trajectory point falls within the grid cell and 0 otherwise. Thus each raw trajectory is converted into a single-channel binary image $BI_{2D}$. 
% i.e.,
% \begin{equation}    
%     BinaryImage_{2D} = BinaryMapping(T, \delta).
% \end{equation}

\subsection{ConvTraj Network Structure}
% 总的说一下，包括三个模块，一维卷积，二维卷积，以及最后的特征融合
As shown in \autoref{fig:convtraj}, the ConvTraj consists of three submodules: 1D convolution, 2D convolution, and feature fusion. The 1D convolution extracts sequential features from the trajectory, while the 2D convolution captures its geo-distribution. The feature fusion module then combines these features for comprehensive analysis. Detailed descriptions of these submodules are provided below.

\textbf{One-dimensional Convolution.}
% 介绍模型结构，包括哪几块
As shown in ~\autoref{fig:convtraj}, 1D convolution is stacked by $n$ residual blocks consisting of a 1D convolution layer, a non-linear ReLU layer, and a max-pooling layer. Each operation is performed on rows of $Seq_{1D}$. By default, the convolution kernel size is $2*3$, the number of channels is $32$, the pooling stride is $2$, and the number of stacking layers $n$ is determined by the maximum length of the trajectory in the dataset. In the end, the features of all channels are flattened into a vector $V_{1D}$. 
% Using $Conv1D$ to denote this network, this process can be formalized as: 
% \begin{gather}    
%     V_{1D} = Conv1D(Seq_{1D}).
% \end{gather}

\textbf{Two-dimensional Convolution.}
% 介绍模型结构，包括哪几块
2D convolution is also stacked by $m$ residual blocks consisting of a 2D convolution layer, a non-linear ReLU layer, and an average-pooling layer. Each operation is performed on the single-channel binary image $BI_{2D}$. By default, the convolution kernel size is $3*3$, the number of channels is $4$, the pooling stride is $2$, and the number of stacking layers $m$ is $4$. In the end, the features of all channels are flattened into a vector $V_{2D}$.
% Using $Conv2D$ to denote this network, this process can be formalized as:
% \begin{gather}    
%     V_{2D} = Conv2D(BI_{2D}).
% \end{gather}

\textbf{Feature Fusion.}
% 特征融合
After performing 1D and 2D convolution on the trajectory in parallel, we concatenate the resulting feature vectors and pass them through an MLP. This submodule combines the sequence order features ($V_{1D}$) extracted by 1D convolution with the geo-distribution features ($V_{2D}$) extracted by 2D convolution, providing comprehensive information for similarity recognition. The final embedding $V$ of the trajectory can be formalized as:
\begin{gather}    
    V = MLP([V_{1D}, V_{2D}]).
\end{gather}

\subsection{Training Pipeline}\label{sec-training-pipeline}
We employ the mainstream training pipeline as shown in~\autoref{fig:overview}, and its details are introduced below.

\textbf{Loss Function.}
% \subsection{}\label{sec-lossfunction}
As shown in \autoref{fig:overview}, we use the combination of triplet loss~\cite{weinberger2009distance,hermans2017defense} $L_{T}$ and MSE loss $L_{M}$ as our loss function. i.e.:
\begin{gather}    
    Loss = L_{T}(T_a, T_p, T_n) + L_{M}(T_a, T_p, T_n),
\end{gather}
where
\begin{gather}    
    L_{T} = max \{ 0, d(V_a, V_p) - d(V_a, V_n) - \eta \}, \\
    L_{M} = |d(V_a, V_p) - f(T_a, T_p)| + |d(V_a, V_n) - f(T_a, T_n)|,
\end{gather}
in which $(T_a, T_p, T_n)$ is a triplet, and $T_a$ is the anchor trajectory, $T_p$ is the positive trajectory that has a smaller distance to $T_a$ than the negative trajectory $T_n$. $V_a$, $V_p$ and $V_n$ are the high-dimensional vectors corresponding to $T_a$, $T_p$ and $T_n$ in the embedding space. $f(\cdot,\cdot)$ represents the true distance between trajectories, and $d(\cdot,\cdot)$ is the Euclidean distance~\cite{DBLP:conf/icde/YaoCZB19,DBLP:conf/kdd/YaoHDCHB22} between two vectors. Besides, $\eta$ is the margin in the triplet loss whose value is $\eta = f(T_a, T_p) - f(T_a, T_n)$.

\textbf{Triplet Selection Method.}
% \subsection{}\label{triplet-section-method}
% 目前有很多的采样方法，包括XXX，我们这里用的是最简单的采样方法，即XXX
Many studies~\cite{DBLP:conf/icde/YaoCZB19,DBLP:conf/ijcai/ZhangZJZSSW20,DBLP:conf/icde/YangWLZQZ22} have proposed various strategies to select triplets for training, but these often bring additional training costs. In this paper, we use the simplest strategy to select triplets. We regard each trajectory in the training set as $T_{a}$ in turn. For each $T_{a}$, we randomly select two trajectories from its top-k neighbors (k=200 by default) and use the trajectory closer to the $T_{a}$ as $T_{p}$, and trajectory farther to $T_{a}$ as $T_{n}$. 

\textbf{Training Process.}
Figure \ref{fig:overview} is the overall training process of ConvTraj, where the blue hollow circle represents the location where the positive trajectory should be in the embedding space after training and the orange hollow circle represents the negative. During the training, for a trajectory (the anchor trajectory in the upper left corner of Figure \ref{fig:overview}), we first use the triplet selection method introduced above to select the positive trajectory (the blue trajectory in Figure \ref{fig:overview}) and negative trajectory (the orange trajectory in Figure \ref{fig:overview}), then these three trajectories are encoded using ConvTraj with shared parameters, and corresponding embedding vectors are obtained, which we call $V_a$ (green full circle), $V_p$ (blue full circle), $V_n$ (orange full circle). Since the loss function we use is a combination of triplet loss and MSE loss, we hope that the distance between anchor and positive in the embedding space is the same as the actual distance (i.e., pulling the blue full circle toward the blue hollow circle), and the distance between anchor and negative in embedding space is the same as the actual distance (i.e. pushing the orange full circle toward the orange hollow circle). 
\begin{figure}[h]
  \centering
  \includegraphics[width=\linewidth]{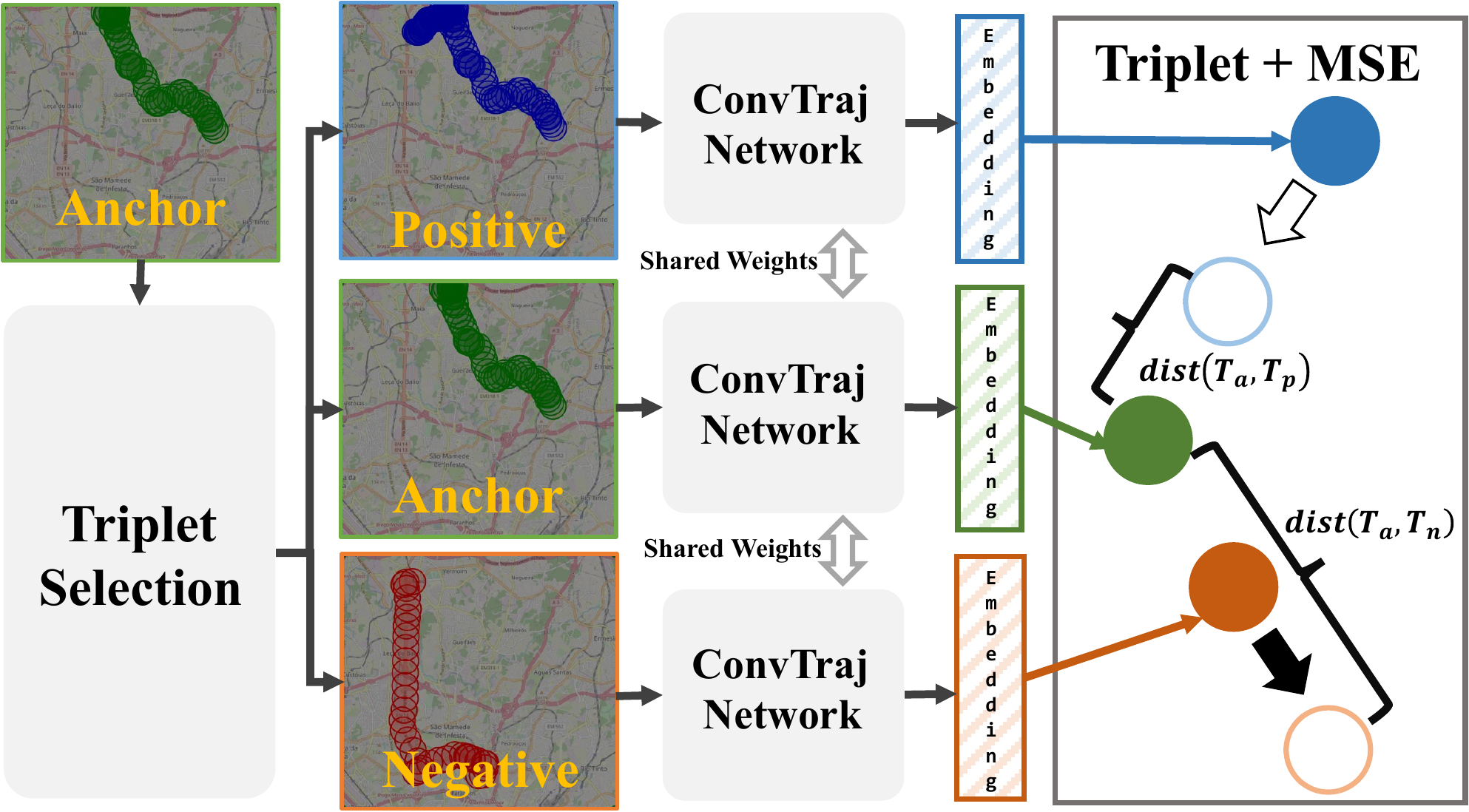}
  \caption{The training pipeline of ConvTraj.}
  \label{fig:overview}
\end{figure}
% 1、说一下这里一维和二维卷积提取的顺序以及位置形态信息对于很多相似性函数是共通的，比如DFD、hausdorff、dtw等，由于空间关系，我们拿dfd举例进行分析。
\section{Theoretical Analysis}
% 拿DFD来举例，其他的metrics可以放在appendix中
% dfd更常用，并且限于篇幅限制，其他的可以很容易的推导出来
In this section, we will conduct some theoretical analysis from both 1D and 2D convolution to help justify why such a simple ConvTraj can perform well.
We take the DFD, which is widely used for trajectory similarity~\cite{DBLP:journals/tkde/ZhangCWYTC22,DBLP:journals/pvldb/XieLP17,DBLP:conf/edbt/ZhangTY19,DBLP:conf/edbt/TangYMW17}, as an example for analysis. In summary, we found that: (1) For a randomly initialized kernel of 1D convolution, the DFD between two trajectories can still be maintained to a large extent. (2) After 1D max-pooling, the DFD value has almost no change. (3) Trajectories located in distant areas not only have a large DFD value but also have a large Euclidean distance through 2D convolution. Since 1D convolution essentially rotates and scales the sequence and 2D convolution captures the geo-distribution of trajectories, thus similar conclusions can be easily generalized to other measurements. Basically, the analysis shows that 1D convolution and max-pooling can preserve the bounds for trajectory similarity learning, while 2D convolution can help capture the geo-distribution. This implies CNNs are a good choice in scenarios where trajectories need to be reduced in dimension or geo-distribution is required. This does not mean that RNNs or Transformers lack it, it is just difficult to analyze.

\subsection{Discrete Frechet Distance}
To facilitate understanding, we first present the formal definition of Discrete Frechet Distance:
\begin{definition}[Trajectory Coupling]
A coupling $L$ between two trajectories $T_1 = [p_1, p_2, ..., p_n]$ and $T_2 = [q_1, q_2, ..., q_m]$ is such a sequence of alignment:
\[ L=(p_{a_1}, q_{b_1}), (p_{a_2}, q_{b_2}), ..., (p_{a_t}, q_{b_t}),\]
where $a_1 = 1, b_1 = 1, a_t = n, b_t = m$. For all $i=1, ..., t$, we have $a_{i+1} = a_i$ or $a_{i+1} = a_i + 1$, and $b_{i+1} = b_i$ or $b_{i+1} = b_i + 1$.
\end{definition}

\begin{definition}[Discrete Frechet Distance]
Given two trajectories $T_1 = [p_1, p_2, ..., p_n]$ and $T_2 = [q_1, q_2, ..., q_m]$, the Discrete Frechet Distance $d_F$ between these     two trajectories is:
\[ d_F(T_1, T_2) = \min\limits_{L} \{ \max\limits_{(p_i, q_j)\in L} d(p_i, q_j)  \}, \]
where $L$ is an instance of coupling between $T_1$ and $T_2$, and $d(\cdot,\cdot)$ is Euclidean distance between two points.
\end{definition}

\subsection{One-dimensional Convolution}
\begin{definition}
Given a trajectory $X=
\big(\begin{smallmatrix}
x_{0, 0}&...&x_{0, M} \\
x_{1, 0}&...&x_{1, M} \\
\end{smallmatrix}\big)$ and a kernel $k=
\big(\begin{smallmatrix}
k_{0, 0}&...&k_{0, 2} \\
k_{1, 0}&...&k_{1, 2} \\
\end{smallmatrix}\big)$, we define the convolution operation of the $j_{th}$ point of $X$ with the kernel $k$ as:
\begin{align*}
c(X_j, k) =&\sum\nolimits_{m=0}^{1}(k_{m, 0} * x_{m, j-1} + k_{m, 1} * x_{m, j} + k_{m, 2} * x_{m, j+1}) \\
=&\sum\nolimits_{m=0}^{1}(k_{m, 0} * (x_{m, j} - \delta_{m, j}^{x}) + k_{m, 1} * x_{m, j} + k_{m, 2} \\ & * (x_{m, j} + \delta_{m, j + 1}^{x})),
\end{align*}
where $\delta_{i, j}^x = x_{i, j}-x_{i, j-1}$.
\end{definition}

% and $\forall i\in\{1, ..., D\}, a\leq k_{i}^{\{0, 1\}} \leq b$.
\begin{theorem}[One-dimensional Convolution Bound]
\label{one-conv}
Given two trajectories $X=
\big(\begin{smallmatrix}
x_{0, 0}&...&x_{0, M} \\
x_{1, 0}&...&x_{1, M} \\
\end{smallmatrix}\big)$, $Y=
\big(\begin{smallmatrix}
y_{0, 0}&...&y_{0, N} \\
y_{1, 0}&...&y_{1, N} \\
\end{smallmatrix}\big)$, and $d_F(X, Y) = d_{xy}$. A one-dimensional convolution operation $C(\cdot)$ on $X$ and $Y$ with stride 1, padding 1, and kernel $k=
\big(\begin{smallmatrix}
k_{0, 0}&k_{0, 1}&k_{0, 2} \\
k_{1, 0}&k_{1, 1}&k_{1, 2} \\
\end{smallmatrix}\big)$
. We have:
\[ max(d(x_0^c, y_0^c), d(x_M^c, y_N^c)) \leq d_F(C(X), C(Y)) \leq \sqrt{S_0^2 + S_1^2} * d_{xy} + \Delta,\]
where $x_0^c=c(X_0, k),y_0^c=c(Y_0, k),x_M^c=c(X_M, k),y_N^c=c(Y_N, k)$,\\ $S_i=\sum_{j=0}^{2}k_{i, j}$, and  $\Delta=\max\limits_{0\leq i\leq M, 0\leq j\leq N} |\sum\nolimits_{m=0}^{1} k_{m, 0}*(\delta_{m, j}^{y}-\delta_{m, i}^{x}) + k_{m, 2}*(\delta_{m, i+1}^{x} - \delta_{m, j+1}^{y})|$.
\end{theorem}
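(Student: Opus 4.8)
The plan is to establish the two inequalities separately, since the lower and upper bounds have quite different flavors. For the lower bound, I would exploit the fact that any coupling $L$ of $C(X)$ and $C(Y)$ must match the first pair $(x_0^c, y_0^c)$ and the last pair $(x_M^c, y_N^c)$ by the boundary conditions $a_1 = b_1 = 1$ and $a_t = n, b_t = m$ in the definition of coupling. Therefore, for \emph{any} coupling $L$, the quantity $\max_{(p_i,q_j)\in L} d(p_i,q_j)$ is at least $\max(d(x_0^c, y_0^c), d(x_M^c, y_N^c))$, and taking the minimum over $L$ preserves this, giving $d_F(C(X), C(Y)) \geq \max(d(x_0^c,y_0^c), d(x_M^c,y_N^c))$. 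This part is essentially immediate once one unpacks the definitions.

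For the upper bound, the strategy is to take an optimal coupling $L^*$ of $X$ and $Y$ realizing $d_F(X,Y) = d_{xy}$, and reuse the \emph{same} index alignment as a (feasible, but not necessarily optimal) coupling for $C(X)$ and $C(Y)$. This immediately gives $d_F(C(X), C(Y)) \leq \max_{(i,j)\in L^*} d(c(X_i,k), c(X_j,k)$ — wait, $d(c(X_i,k), c(Y_j,k))$ — so it suffices to bound $d(c(X_i,k), c(Y_j,k))$ for a single matched pair $(p_i, q_j)$ with $d(p_i, q_j) \leq d_{xy}$. Using the second (telescoped) form of the convolution operator from the preceding definition, I would write $c(X_i,k) - c(Y_j,k)$ as a sum of two contributions: a ``scaling'' term $\sum_m (k_{m,0}+k_{m,1}+k_{m,2})(x_{m,i} - y_{m,j}) = \sum_m S_m (x_{m,i}-y_{m,j})$, and an ``increment discrepancy'' term involving the $\delta^x$ and $\delta^y$ quantities. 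Applying the triangle inequality in $\mathbb{R}^2$ and then Cauchy–Schwarz (or direct coordinatewise estimation) to the scaling term yields a bound of $\sqrt{S_0^2 + S_1^2}\cdot d(p_i,q_j) \leq \sqrt{S_0^2+S_1^2}\cdot d_{xy}$, while the increment discrepancy term is bounded by $\Delta$ by its very definition as a maximum over all index pairs. Summing the two bounds and taking the max over $(i,j)\in L^*$ completes the argument.

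The main obstacle I anticipate is getting the algebra of the convolution difference organized cleanly into exactly the ``$S$-scaling plus $\delta$-discrepancy'' decomposition that the statement's constants anticipate, and making sure the $\sqrt{S_0^2+S_1^2}$ factor comes out correctly rather than, say, $(|S_0| + |S_1|)$ or $\max(|S_0|,|S_1|)$. Concretely, after writing $x_{m,i\pm 1} = x_{m,i} \mp \delta^x$ etc., the term $c(X_i,k)-c(Y_j,k)$ in channel-summed form is $\sum_m S_m(x_{m,i}-y_{m,j}) + (\text{a }\delta\text{-term})$; but the convolution output is a \emph{scalar} per point (summed over the two input channels $m=0,1$), so I need to be careful about what ``$d(x_i^c, y_j^c)$'' means — presumably the kernel is applied to produce a $2$-channel (or appropriately shaped) output so that the Euclidean distance $d(\cdot,\cdot)$ still makes sense, and the factor $\sqrt{S_0^2+S_1^2}$ suggests treating the two rows of the kernel as producing two output coordinates whose squared discrepancies add under the $\ell_2$ norm. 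I would pin down this indexing convention first, then the Cauchy–Schwarz step that produces $\sqrt{S_0^2 + S_1^2}$ falls out naturally, and the rest is routine triangle-inequality bookkeeping.
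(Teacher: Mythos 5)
Your proposal is correct and follows essentially the same route as the paper: the lower bound via the forced boundary pairs $(x_0^c,y_0^c)$ and $(x_M^c,y_N^c)$ in every coupling, and the upper bound by reusing the optimal coupling $L^*$, splitting $c(X_i,k)-c(Y_j,k)$ into the $S$-scaling term plus the $\delta$-discrepancy term, and applying the triangle inequality and Cauchy--Schwarz to get $\sqrt{S_0^2+S_1^2}\,d_{xy}+\Delta$. Your only hesitation, the shape of the convolution output, resolves simply: $c(\cdot,k)$ is a scalar per position (summed over both channels), so $d$ is just an absolute difference and the factor $\sqrt{S_0^2+S_1^2}$ is the norm of the coefficient vector $(S_0,S_1)$ in the Cauchy--Schwarz step, exactly as you anticipated.
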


\begin{proof}
Suppose that $C(X) = [x_0^c, ..., x_M^c]$, where $x_i^c = c(X_i, k)$.
Similarly, $C(Y) = [y_0^c, ..., y_N^c]$. Since $d_{F}(X,Y)=d_{xy}$, we assume that the coupling corresponding to $d_{xy}$ is $L^{*}$, and the indexs of $X$ and $Y$ in $L^{*}$ are $p$ and $q$. Then we apply $L^{*}$ to $d_{F}(C(X),C(Y))$, thus:
\begin{align*}
&d_F(C(X), C(Y))\leq\max\limits_{(x_i^c, y_j^c)\in L^{*}} d(x_i^c, y_j^c)\\
=&\max\limits_{(x_i^c, y_j^c)\in L^{*}} |c(X_i, k) - c(Y_j, k)|\\
=&\max\limits_{(x_i^c, y_j^c)\in L^{*}} |\sum\nolimits_{m=0}^{1} (k_{m, 0} * (x_{m, i} - \delta_{m, i}^{x} - y_{m, j} + \delta_{m, j}^{y}) \\&+ k_{m, 1} * (x_{m, i} - y_{m, j}) + k_{m, 2} * (x_{m, i} + \delta_{m, i + 1}^{x} - y_{m, j} - \delta_{m, j + 1}^{y}))|\\
=&\max\limits_{(x_i^c, y_j^c)\in L^{*}} |S_0 * (x_{0, i} - y_{0, j}) + S_1 * (x_{1, i} - y_{1, j}) + \Delta_{i, j}|\\
\leq&\max\limits_{(x_i^c, y_j^c)\in L^{*}} |S_0 * (x_{0, i} - y_{0, j}) + S_1 * (x_{1, i} - y_{1, j})| + |\Delta_{i, j}|\\
\leq&|S_0 * (x_{0, p} - y_{0, q}) + S_1 * (x_{1, p} - y_{1, q})| + \max\limits_{0\leq i\leq M, 0\leq j\leq N} |\Delta_{i, j}|,
\end{align*}
where $\Delta_{i, j} = \sum\nolimits_{m=0}^{1} k_{m, 0}*(\delta_{m, j}^{y}-\delta_{m, i}^{x}) + k_{m, 2}*(\delta_{m, i+1}^{x} - \delta_{m, j+1}^{y})$. Based on Cauchy–Schwarz inequality, we can get:
\begin{align*}
& |S_0 * (x_{0, p} - y_{0, q}) + S_1 * (x_{1, p} - y_{1, q})|\\
\leq& \sqrt{S_0^2 + S_1^2} * \sqrt{(x_{0, p} - y_{0, q})^2 + (x_{1, p} - y_{1, q})^2}\\
=&\sqrt{S_0^2 + S_1^2} * d_{xy}.
\end{align*}
In addition, there is always such a coupling $L^{\#}=(x_0^c, y_0^c), ..., (x_M^c, y_N^c)$ between $C(X)$ and $C(Y)$, thus:
\begin{align*}
d_F(C(X), C(Y)) \geq max(d(x_0^c, y_0^c), d(x_M^c, y_N^c)).
\end{align*}
The proof can be completed by rearranging the above formula.
\end{proof}
To verify the effectiveness of \autoref{one-conv}, we randomly selected 5000 pairs of trajectories from the Porto dataset for testing and randomly initialized a convolution kernel $k \in \mathbb{R}^{2 \times 3}$ using PyTorch. The results in ~\autoref{fig:bound}a show that the DFD between the trajectories after 1D convolution can accurately fall between the bounds of our predictions by \autoref{one-conv}. In addition, there is a positive correlation between the true DFD and the DFD after 1D convolution in ~\autoref{fig:bound}b, which implies that \textit{even for a randomly initialized kernel, the DFD can still be maintained to a large extent}.
\begin{figure}[h]
  \centering
  \includegraphics[width=\linewidth]{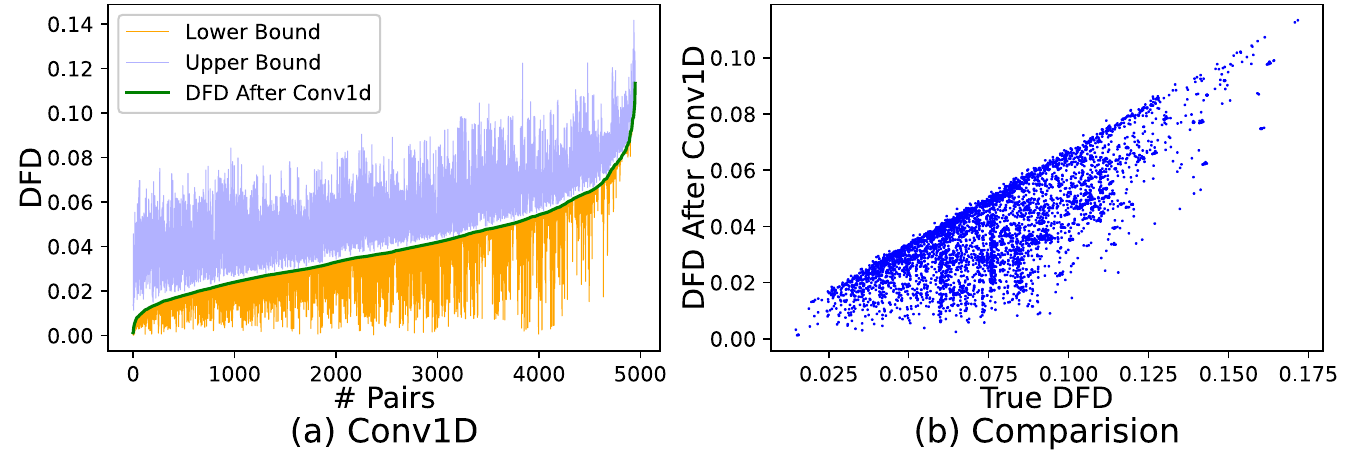}
  \caption{1D convolution bound visualization on Porto.}
  \label{fig:bound}
\end{figure}
\begin{theorem}[One-dimensional Max-Pooling Bound]
\label{one-pool}
% , and $\forall i\in\{1, 2, ..., M\}, a\leq X_{i}^{\{1, 2\}} \leq b$;  $\forall j\in\{1, 2, ..., N\}, c\leq Y_{j}^{\{1, 2\}} \leq d$
% \in \mathbf{R}^{2*M}
% and $X \in \mathbf{R}^{D*M}$, $Y \in \mathbf{R}^{D*N}$
Given two sequences $X = [x_1, ..., x_M]$, $Y = [y_1, ..., y_N]$, and each $x_i\in X(y_i\in Y)$ is a $l$-dimensional vector, i.e., $x_i=[x_{i, 1}, ..., x_{i, l}]^{\mathsf{T}}$. A one-dimensional max pooling operation $P(\cdot)$ on $X, Y$ with size $k$ and stride $k$, assuming that $M$ and $N$ are divisible by $k$. Then the following holds:
\[ d_F(X, Y) - bound \leq d_F(P(X), P(Y)) \leq d_F(X, Y) + bound, \]
in which
% \[ bound = \sum_{\#=x, y} max \{ d(Min_i^\#, Max_i^\#)| i \in [1, \frac{M}{k}]\} \]
% \[ bound = \sum_{\#=X, Y} max \{ d(p_{min}(\#, i), p_{max}(\#, i))| i \in [1, \frac{M}{k}]\} \]
\[ bound = max \{ d(X_{i}^{\downarrow}, X_{i}^{\uparrow})| 1\leq i\leq\frac{M}{k}\} + max \{ d(Y_{i}^{\downarrow}, Y_{i}^{\uparrow})| 1\leq i\leq\frac{N}{k}\}, \]
% \[ bound = max \{ d(Min_i^x, Max_i^x) \} + max \{ d(Min_i^y, Max_i^y) \}, i \in [1, \frac{M}{k}] \]
% And
% \[ p_{min}(\#, t) = [min\{\#_{0, i}\}, ..., min\{\#_{D, i}\}], i\in[(t - 1) * k + 1, t * k] \]
% \[ p_{max}(\#, t) = [max\{\#_{0, i}\}, ..., max\{\#_{D, i}\}], i\in[(t - 1) * k + 1, t * k] \]
% \[ Min_t^{\#} = [min\{\#_{i}^{lat}\}, min\{\#_{i}^{lon}\}], i\in[(t - 1) * k + 1, t * k] \]
% \[ Max_t^{\#} = [max\{\#_{i}^{lat}\}, max\{\#_{i}^{lon}\}], i\in[(t - 1) * k + 1, t * k] \]
and $X_{i}^{\downarrow} = [x_{i, 1}^\downarrow, ..., x_{i, l}^\downarrow]^{\mathsf{T}}$, $x_{i, j}^\downarrow = min \{ x_{t, j}|t\in[(i-1)*k+1, i*k)\}$; $X_{i}^{\uparrow} = [x_{i, 1}^\uparrow, ..., x_{i, l}^\uparrow]^{\mathsf{T}}$, $x_{i, j}^\uparrow = max \{ x_{t, j}|t\in[(i-1)*k+1, i*k)\}$.(The same goes for $Y_{i}^{\downarrow}$ and $Y_{i}^{\uparrow}$)
\end{theorem}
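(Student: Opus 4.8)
```latex
\textbf{Proof proposal.} The plan is to bound $d_F(P(X), P(Y))$ from both sides by comparing an optimal coupling of $X, Y$ with an induced coupling of the pooled sequences $P(X), P(Y)$, using the fact that each pooled point $P(X)_i$ lies inside the coordinate-wise box $[X_i^\downarrow, X_i^\uparrow]$, and symmetrically for $Y$. The key observation I would establish first is a \emph{locality lemma}: for any index $t$ in the $i$-th pooling window of $X$, we have $d(x_t, P(X)_i) \le d(X_i^\downarrow, X_i^\uparrow)$, because both $x_t$ and the pooled vector $P(X)_i$ have every coordinate lying between $x_{i,j}^\downarrow$ and $x_{i,j}^\uparrow$, so the coordinate-wise difference is at most $x_{i,j}^\uparrow - x_{i,j}^\downarrow$; summing in quadrature gives the claim. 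This is the engine of both inequalities.

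For the upper bound, I would take an optimal coupling $L^*$ of $X$ and $Y$ achieving $d_F(X,Y)$ and project it to a coupling $L'$ of $P(X), P(Y)$ by mapping each original index to its pooling-window index. Because the window-index map is monotone nondecreasing and surjective onto $\{1,\dots,M/k\}$ (resp. $\{1,\dots,N/k\}$), the image $L'$ is a valid coupling of the pooled sequences (it still starts at $(1,1)$, ends at $(M/k, N/k)$, and advances by at most one in each coordinate). For any pair $(P(X)_i, P(Y)_j) \in L'$ there is a witnessing pair $(x_t, y_s) \in L^*$ in the corresponding windows, and by the triangle inequality $d(P(X)_i, P(Y)_j) \le d(P(X)_i, x_t) + d(x_t, y_s) + d(y_s, P(Y)_j) \le d_F(X,Y) + d(X_i^\downarrow, X_i^\uparrow) + d(Y_j^\downarrow, Y_j^\uparrow)$, which is at most $d_F(X,Y) + bound$ after taking the maxima. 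Taking $\max$ over $L'$ and then the $\min$ over all couplings gives $d_F(P(X),P(Y)) \le d_F(X,Y) + bound$.

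For the lower bound I would run the symmetric argument in reverse: take an optimal coupling $\hat L$ of $P(X), P(Y)$ and lift it to a coupling of $X, Y$ — e.g. replace each pooled pair $(P(X)_i, P(Y)_j)$ by the block of original-index pairs $(x_{(i-1)k+r}, y_{(j-1)k+r})$ for $r = 1,\dots,k$ (or any monotone stitching of the two windows that begins and ends consistently) — and then bound $d(x_t, y_s) \le d(x_t, P(X)_i) + d(P(X)_i, P(Y)_j) + d(P(Y)_j, y_s) \le d_F(P(X),P(Y)) + bound$ via the locality lemma. Since this lifted object is a genuine coupling of $X,Y$, its bottleneck cost is at least $d_F(X,Y)$, giving $d_F(X,Y) \le d_F(P(X),P(Y)) + bound$. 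Rearranging yields the left inequality.

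The main obstacle I anticipate is purely bookkeeping rather than conceptual: verifying that the projected coupling $L'$ (upper bound) and the lifted coupling (lower bound) actually satisfy the coupling axioms — monotone unit steps, correct endpoints — given that $M, N$ need not be equal and the two windows being stitched can have different positions in the alignment. One has to be slightly careful that when $L^*$ makes several consecutive moves entirely inside one $X$-window, the projection collapses them to a stationary step on the pooled side, which is still legal; and conversely the lifting must interleave the two windows without skipping an index. A clean way to sidestep fragile index arithmetic is to phrase the lift via the monotone surjection $\pi_X:\{1,\dots,M\}\to\{1,\dots,M/k\}$, $\pi_Y$ analogously, and note that $(\pi_X, \pi_Y)$ maps valid couplings to valid couplings, while any monotone section gives a lift in the other direction; once that structural fact is isolated, both inequalities follow from the locality lemma and the triangle inequality as above.
```
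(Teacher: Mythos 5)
Your proposal is correct, but it proves the theorem by a different decomposition than the paper. The paper treats $d_F$ as a metric and applies its triangle inequality twice at the sequence level, $d_F(X,Y)\leq d_F(X,P(X))+d_F(P(X),P(Y))+d_F(P(Y),Y)$ and symmetrically, so that everything reduces to bounding the single quantities $d_F(X,P(X))$ and $d_F(Y,P(Y))$; each of these is then bounded by $\max_i d(X_i^{\downarrow},X_i^{\uparrow})$ via the explicit coupling that matches each pooled point to the $k$ points of its window, using exactly your locality observation that $P(X)_i=X_i^{\uparrow}$ and every $x_t$ in the window lie in the box $[X_i^{\downarrow},X_i^{\uparrow}]$. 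You instead never invoke the (nontrivial, though standard) triangle inequality of the discrete Fr\'echet distance itself: you push an optimal coupling of $(X,Y)$ forward through the window maps $\pi_X,\pi_Y$ for the upper bound, and lift an optimal coupling of $(P(X),P(Y))$ for the lower bound, applying only the Euclidean triangle inequality pairwise together with the same locality lemma. What your route buys is self-containedness and an argument tailored to the pooling structure (the fact that $(\pi_X,\pi_Y)$ maps couplings to couplings is immediate here); what it costs is the index bookkeeping you yourself flag. On that point, note that your first concrete lifting recipe --- replacing the pooled pair $(i,j)$ by the pairs $(x_{(i-1)k+r},y_{(j-1)k+r})$, $r=1,\dots,k$ --- does not literally work when the pooled coupling makes a stationary step (say $(i,j)\to(i+1,j)$), since it would re-traverse window $j$ of $Y$ and move that index backwards; the fix, implicit in your monotone-section formulation, is to hold the stationary side at a fixed representative of its window (e.g.\ its last index) while the other side advances, which keeps every lifted pair inside windows covered by the locality lemma, so the argument goes through.
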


\begin{proof}
Based on the triangle inequality of DFD, we can get:
\begin{align*}
d_F(X, Y) & \leq d_F(X, P(X)) + d_F(P(X), Y) \\
& \leq d_F(X, P(X)) + d_F(P(X), P(Y)) + d_F(P(Y), Y).
\end{align*}
% Thus:
% \[ d_F(X, Y) - (d_F(P(X), X) + d_F(Y, P(Y))) \leq d_F(P(X), P(Y)) \]
Using this property again, we have:
\[ d_F(P(X), P(Y)) \leq d_F(X, Y) + (d_F(P(X), X) + d_F(Y, P(Y))). \]
Rearrange these two inequalities, we can get:
\begin{align*}    
    bound = d_F(X, P(X)) + d_F(Y, P(Y)).
\end{align*}
% \[ X = [x_1, x_2, ..., x_M] \]
% \[ = [[x_1^1, x_2^1, ..., x_M^1], [x_1^2, x_2^2, ..., x_M^2]] \]
Suppose $P(X) = [x_1^p, ..., x_{\frac{M}{k}}^p]$, and each $x_i^p$ is a $l$-dimensional vector, i.e., $x_i^p=[x_{i, 1}^p, ..., x_{i, l}^p]^{\mathsf{T}}$, where $x_{i, j}^p = max \{ x_{t, j}|t\in[(i-1)*k+1, i*k)\}$.
Then for $d_F(X, P(X))$, we can always construct such a coupling $L^*=\underbrace{(x_1, x_1^p), ..., (x_k, x_1^p)}_{\mbox{k}}, ..., \underbrace{(x_{M-k+1}, x_{\frac{M}{k}}^p), ..., (x_{M}, x_{\frac{M}{k}}^p)}_{\mbox{k}}$.
% \[ L^* = \underbrace{(x_1, x_1^p), ..., (x_k, x_1^p)}_{\mbox{k}}, ..., \underbrace{(x_{M-k+1}, x_{\frac{M}{k}}^p), ..., (x_{M}, x_{\frac{M}{k}}^p)}_{\mbox{k}}\]
Thus $d_F(X, P(X)) \leq max_{(x_i, x_j^p) \in L^*} d(x_i, x_j^p)$.
In this way, we divide the coupling $L^*$ into $\frac{M}{k}$ groups. Without loss of generality, we take out the $t$-th group, that is:
\[ (x_{(t - 1)*k + 1}, x_t^p), ..., (x_{t * k}, x_t^p), \]
thus for $i\in[(t-1)*k+1, t*k)$, we have:
\begin{align*}
max(d(x_{i}, x_t^p))=&max(d([x_{i, 1}, ..., x_{i, l}]^{\mathsf{T}}, [x_{t, 1}^p, ..., x_{t, l}^p]^{\mathsf{T}}))\\
=&max(d([x_{i, 1}, ..., x_{i, l}]^{\mathsf{T}}, X_t^{\uparrow}))\\
\leq&d(X_t^{\downarrow}, X_t^{\uparrow}).
\end{align*}
Using this bound to $d_F(Y, P(Y))$ completes the proof.
\end{proof}

To verify the effectiveness of \autoref{one-pool}, we use the same setting as ~\autoref{one-conv} for testing. The size and stride of max-pooling are set to 2, i.e., $k=2$. As shown in~\autoref{fig:max-bound}a, the DFD between the trajectories after max-pooling can also accurately fall between the bounds of our predictions by \autoref{one-pool}. In addition, ~\autoref{fig:max-bound}b shows that the real DFD value has almost no change compared with the DFD after max-pooling, this implies that \textit{max-pooling is a suitable technique that can reduce the dimensionality of trajectory sequences with almost no loss of effective features that are important for DFD-based similarity recognition}.
\begin{figure}[h]
  \centering
  \includegraphics[width=\linewidth]{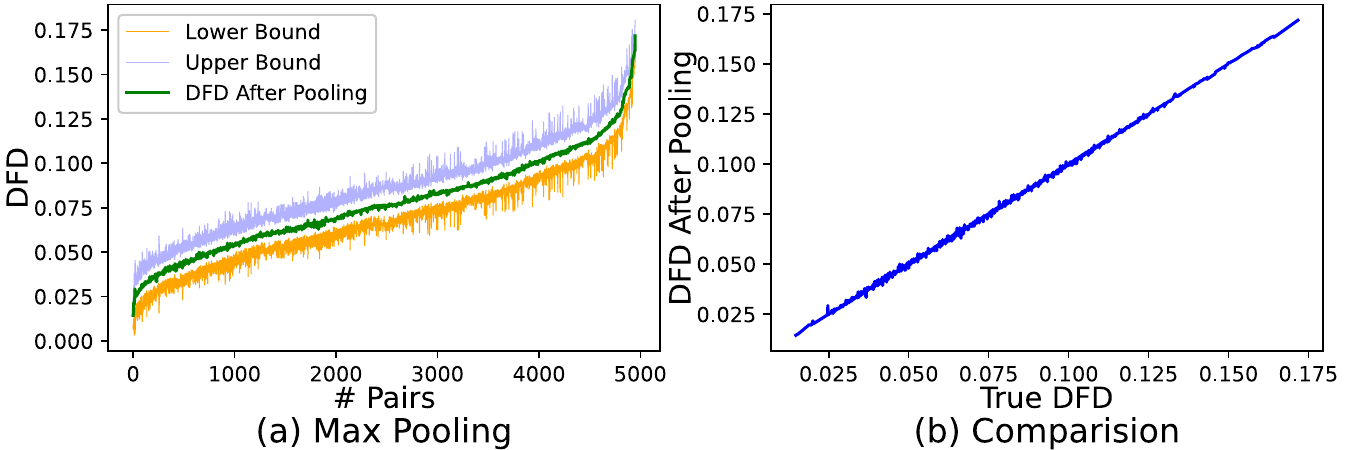}
  \caption{1D max-pooling bound visualization on Porto.}
  \label{fig:max-bound}
\end{figure}
\subsection{Two-dimensional Convolution}
\begin{definition}[Trajectory MBR Distance]
Given two trajectories $X$ and $Y$, we denote the Minimum Bounding Rectangles (MBRs) of $X$ and $Y$ as $X_{mbr}$ and $Y_{mbr}$ based on the minimum and maximum longitude and latitude of the trajectory. We thus define the distance between $X_{mbr}$ and $Y_{mbr}$ is:
\[ dist(X_{mbr}, Y_{mbr}) = \min\limits_{p\in X_{mbr}, q \in Y_{mbr}} d(p, q), \]
where $d(\cdot,\cdot)$ is Euclidean distance between two points.
\end{definition}

\begin{theorem}[Two-dimensional Convolution Bound]
\label{two-conv}
Given two trajectories $X$ and $Y$, their MBRs are $X_{mbr}$ and $Y_{mbr}$.
We denote the binary images of $X$ and $Y$ based on the grid width $\delta$ as $X_{BI}$, $Y_{BI}$, and the non-0 pixels contained in $X_{BI}$ and $Y_{BI}$ are $n$ and $m$ respectively.
A two-dimensional convolution operation $C(\cdot)$ on $X_{BI}$ and $Y_{BI}$ with stride 1, padding 1, kernel $k \in \mathbb{R+}^{3 \times 3}$. If $dist(X_{mbr}, Y_{mbr}) > 2\sqrt{2}*\delta$, then we have $d_{F}(X, Y) > 2\sqrt{2}*\delta$, and
\[d(C(X_{BI}), C(Y_{BI})) \geq \sqrt{(n+m)*\sum_{i=0}^{2}\sum_{j=0}^{2}(k_{i,j})^2},\]
where $d(\cdot,\cdot)$ is Euclidean distance between two vectors.
\end{theorem}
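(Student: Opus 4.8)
The plan is to prove the two assertions separately: the DFD bound is elementary, and the embedding-distance bound reduces to a support-disjointness claim plus a norm lower bound.

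For $d_F(X,Y) > 2\sqrt{2}\,\delta$, I would use only MBR containment and the definition of the Discrete Frechet Distance. Every vertex of $X$ lies in $X_{mbr}$ and every vertex of $Y$ lies in $Y_{mbr}$, so every pairwise distance obeys $d(p_i,q_j)\ge dist(X_{mbr},Y_{mbr}) > 2\sqrt{2}\,\delta$. Any coupling between $X$ and $Y$ contains at least one aligned pair, hence its cost $\max_{(p_i,q_j)\in L} d(p_i,q_j)$ exceeds $2\sqrt{2}\,\delta$; minimizing over couplings gives $d_F(X,Y) > 2\sqrt{2}\,\delta$.

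For the embedding-distance bound, I would first show that $C(X_{BI})$ and $C(Y_{BI})$ have disjoint supports. Since the kernel is entrywise positive and $C$ uses a $3\times 3$ window with stride $1$ and padding $1$, the output at a pixel $\pi$ is a nonnegative combination of the nine input values in the $3\times 3$ block centered at $\pi$; hence $C(Z)[\pi]\neq 0$ forces some pixel within Chebyshev distance $1$ of $\pi$ to be active in $Z$. If a pixel $\pi$ were active in both outputs, there would be an active cell $c_X$ of $X_{BI}$ and an active cell $c_Y$ of $Y_{BI}$, each within Chebyshev distance $1$ of $\pi$; each of these cells contains a genuine trajectory vertex, and since $c_X$ and $c_Y$ sit inside a single $3\times 3$ block of the $\delta$-grid, those two vertices would lie within $2\sqrt{2}\,\delta$ of one another, contradicting $dist(X_{mbr},Y_{mbr}) > 2\sqrt{2}\,\delta$. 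With the supports disjoint, at every pixel at most one of the two convolved images is nonzero, so the cross term vanishes and $\|C(X_{BI})-C(Y_{BI})\|^2 = \|C(X_{BI})\|^2 + \|C(Y_{BI})\|^2$. It then remains to show $\|C(X_{BI})\|^2 \ge n\sum_{i,j}k_{i,j}^2$ (and the analogue with $m$ for $Y$). Expanding the squared norm as a sum over pairs of active input cells of an autocorrelation of the zero-extended kernel, each of the $n$ diagonal pairs contributes exactly the lag-$0$ autocorrelation $\sum_{i,j}k_{i,j}^2$, while every off-diagonal term is an autocorrelation of a nonnegative array and is therefore $\ge 0$; this yields the bound. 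Adding the two contributions and taking a square root gives $d(C(X_{BI}),C(Y_{BI})) \ge \sqrt{(n+m)\sum_{i,j}k_{i,j}^2}$.

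The main obstacle is the support-disjointness step: it is where the receptive-field spread of the $3\times 3$ convolution, the rasterization of the trajectories onto the $\delta$-grid, and the MBR-separation hypothesis must be reconciled, and where the constant $2\sqrt{2}\,\delta$ has to be traced precisely through the geometry of the affected cells (one must also verify that an active output pixel really forces a trajectory vertex, not merely a set pixel, into a nearby cell). The remaining pieces are routine: the cross-term cancellation is immediate from disjointness of supports, and the norm lower bound uses positivity of the kernel only to discard the nonnegative off-diagonal cross-correlation terms. Note that $k\in\mathbb{R}_+^{3\times 3}$ is used essentially twice here — once so that no cancellation can alter the support (making the dilation argument valid), and once so that the off-diagonal autocorrelations are nonnegative in the final estimate.
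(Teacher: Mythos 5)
Your overall route is the same as the paper's: the DFD lower bound via ``every aligned pair is farther than the MBR separation,'' then disjointness of the supports of $C(X_{BI})$ and $C(Y_{BI})$ so the cross term vanishes, then a per-active-cell norm lower bound using positivity of the kernel (the paper phrases your autocorrelation expansion as ``$C(X_{BI})$ is a superposition of $n$ kernels''). So there is no difference in strategy; the DFD part and the norm part are fine.

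The genuine gap is in your support-disjointness step, and it is quantitative: two cells lying in a single $3\times 3$ block of $\delta$-cells do \emph{not} force their contained vertices to be within $2\sqrt{2}\,\delta$ of each other --- the diameter of a $3\delta\times 3\delta$ block is $3\sqrt{2}\,\delta$. Concretely, a vertex of $X$ at $(0.1\delta,0.1\delta)$ (cell $(0,0)$) and a vertex of $Y$ at $(2.9\delta,2.9\delta)$ (cell $(2,2)$) are $2.8\sqrt{2}\,\delta>2\sqrt{2}\,\delta$ apart, yet both cells lie in the $3\times 3$ block centered at $(1,1)$, so after a $3\times 3$ convolution with stride $1$ and padding $1$ both outputs are nonzero at pixel $(1,1)$. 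Hence under the stated hypothesis $dist(X_{mbr},Y_{mbr})>2\sqrt{2}\,\delta$ the disjointness you need cannot be derived (the active cells are only guaranteed to be at Chebyshev index distance $\geq 2$, while dilation by one pixel on each side requires distance $\geq 3$); your claimed contradiction would only go through under the stronger hypothesis $dist(X_{mbr},Y_{mbr})>3\sqrt{2}\,\delta$. To be fair, the paper's own proof jumps over exactly the same point (``$dist(X_{BI_{mbr}},Y_{BI_{mbr}})\geq 2$ \ldots\ we can easily deduce that there is no overlap''), so your attempt to make the receptive-field argument explicit has in fact located the place where the constant is too small rather than introduced a new error; but as written, the step ``those two vertices would lie within $2\sqrt{2}\,\delta$'' is false and the disjointness claim does not follow. (A smaller shared caveat: the per-cell bound $\lVert C(X_{BI})\rVert_2^2\geq n\sum_{i,j}k_{i,j}^2$ silently assumes no stamped kernel is truncated at the image boundary, which both you and the paper gloss over.)
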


\begin{proof}
We can easily get $\min\limits_{p\in X, q \in Y} d(p, q) > 2\sqrt{2}*\delta$ based on $dist(X_{mbr}, Y_{mbr})> 2\sqrt{2}*\delta$, thus:
\begin{align*}
d_F(X, Y) &= \min\limits_{L} \{ \max\limits_{(p_i, q_j)\in L} d(p_i, q_j)\} \geq \min \limits_{p_i\in X, q_j \in Y} d(p_i, q_j)\\
&=\min\limits_{p\in X, q \in Y} d(p, q)> 2\sqrt{2}*\delta.
\end{align*}
Due to $dist(X_{mbr}, Y_{mbr})>2\sqrt{2}*\delta$, there must be:
\[dist(X_{BI_{mbr}}, Y_{BI_{mbr}})\geq 2.\]
Since stride and padding are 1, we can easily deduce that there is no overlap between $C(X_{BI})$ and $C(Y_{BI})$, thus:
\[d(C(X_{BI}), C(Y_{BI})) = \sqrt{\left\|C(X_{BI})\right\|_{2}^2 + \left\|C(Y_{BI})\right\|_{2}^2}.\]
Considering that $C(X_{BI})$ is essentially a superposition of $n$ kernels at different locations, and $k_{i, j} > 0$, thus in any case there is:
\[\left\|C(X_{BI})\right\|_{2}^2 \geq  n * \sum_{i=0}^{2}\sum_{j=0}^{2}(k_{i,j})^2 \]
Applying this bound to $\left\|C(Y_{BI})\right\|_{2}^2$ completes the proof.
\end{proof}

\section{Experiments}
\subsection{Experimental Setting}
\textbf{Datasets.}
We evaluate the performance of ConvTraj using four widely used real-world datasets: \textbf{Geolife\footnote{https://www.microsoft.com/en-us/research/publication/geolife-gps-trajectory-dataset-user-guide/}}, \textbf{Porto\footnote{https://www.kaggle.com/competitions/pkdd-15-predict-taxi-service-trajectory-i/data}}, \textbf{Chengdu} and \textbf{TrajCL-Porto\footnote{https://github.com/changyanchuan/TrajCL}}. For Geolife and Porto, we preprocess them using the method in~\cite{DBLP:conf/icde/YaoCZB19}, i.e. selecting trajectories in the central area of the city and removing items with less than 10 records. For TrajCL-Porto, it's an open-source dataset of TrajCL\cite{DBLP:conf/icde/Chang0LT23}, we thus do not perform any processing. For Chengdu, we randomly selected 5000 trajectories from this dataset. The properties of these datasets are shown in \autoref{tab:datasset}.

% 训练集与测试集的划分
\begin{table}[h]
  \centering
  \caption{Trajectory Dataset Properties}
  \label{tab:datasset}
  \resizebox{\columnwidth}{!}{%
  \begin{tabular}{ccccc}
    \toprule
    \textbf{Dataset}  & \textbf{Geolife} & \textbf{Porto} & \textbf{Chengdu} & \textbf{TrajCL-Porto}\\ \midrule
    \# Total Items                    & $13386$        & $1601579$ & $5000$ & $9000$  \\
    \# Training Items                    & $3000$        & $3000$ & $1000$ & $7000$ \\
    \# Query Items                    & $1000$        & $500$ & $1000$& $2000$ \\
    \# Candidate Items                    & $9386$        & $1598079$ & $3000$& $2000$  \\
    \hline
    Avg-(\# Points)         & $437.80$          & $48.91$ & $228.44$  & $49.72$\\
    Min-(\# Points)         & $11$        & $11$ & $29$    & $20$\\
    Max-(\# Points)         & $7579$        & $3836$ & $1575$ & $200$\\
    Lat-Lon Area                      &\makecell{(116.20, 116.50) \\ (39.85, 40.07)}&\makecell{(-8.73, -8.50)\\(41.10, 41.24)} &\makecell{(104.04, 114.10) \\ (30.65, 30.73)}&\makecell{(-8.70, -8.52)\\(41.10, 41.208)}\\
    \bottomrule
\end{tabular}
}
\end{table}

% effectiveness
\begin{table*}[]
\caption{Embedding Results On Geolife dataset (3 runs)}
\label{tab:metric-result-haus-dfd}
\resizebox{0.95\textwidth}{!}{%
\begin{tabular}{c|ccccc|ccccc}
\toprule
\multicolumn{11}{c}{\textbf{Geolife}}                     \\
\hline
&\multicolumn{5}{c|}{\textbf{Hausdorff}}                   & \multicolumn{5}{c}{\textbf{DFD}}                     \\ 
                    Model   & HR@1 & HR@5 & HR@10 & HR@50 & R10@50 & HR@1 & HR@5 & HR@10 & HR@50 & R10@50 \\ \midrule
t2vec                  &   $22.00_{\pm 0.38} $   &   $22.82_{\pm 0.57} $   &   $24.48_{\pm 0.42} $      &   $26.64_{\pm 0.14} $    &   $44.60_{\pm 0.11} $          &   $25.70_{\pm 0.53} $   &   $26.36_{\pm 0.22} $   &   $28.33_{\pm 0.43} $  &   $32.44_{\pm 0.21} $    &   $53.45_{\pm 0.14} $         \\    
TrjSR                  &   $28.30_{\pm 0.23} $   &   $34.86_{\pm 0.12} $   &   $37.26_{\pm 0.11} $      &   $42.75_{\pm 0.06} $    &   $66.56_{\pm 0.01} $          &   $25.70_{\pm 0.32} $   &   $29.92_{\pm 0.31} $   &   $33.29_{\pm 0.09} $  &   $36.94_{\pm 0.04} $    &   $61.62_{\pm 0.01} $         \\  
TrajCL                 &   $22.03_{\pm 0.49} $   &   $31.08_{\pm 0.02} $   &   $37.21_{\pm 0.01} $      &   $52.49_{\pm 0.07} $    &   $72.55_{\pm 0.08} $          &   $25.40_{\pm 0.51} $   &   $33.51_{\pm 0.12} $   &   $38.98_{\pm 0.32} $  &   $54.72_{\pm 0.39} $    &   $75.82_{\pm 0.43} $         \\  
NeuTraj              &   $\underline{34.53_{\pm 1.51}} $   &   $\underline{42.31_{\pm 0.18}} $   &   $\underline{48.40_{\pm 0.18}} $      &   $\underline{61.88_{\pm 0.14}} $    &   $\underline{80.38_{\pm 0.01}} $          &   $\underline{46.37_{\pm 1.40}} $   &   $\underline{58.50_{\pm 0.12}} $   &   $\underline{64.47_{\pm 0.42}} $  &   $\underline{76.43_{\pm 0.47}} $    &   $\underline{94.44_{\pm 0.41}} $         \\  
Traj2SimVec               &   $26.46_{\pm 1.34} $   &   $33.49_{\pm 0.96} $   &   $42.39_{\pm 0.36} $      &   $48.26_{\pm 0.31} $    &   $65.12_{\pm 0.23} $          &   $27.13_{\pm 0.94} $   &   $40.39_{\pm 0.98} $   &   $42.75_{\pm 0.33} $  &   $50.27_{\pm 0.26} $    &   $70.20_{\pm 0.02} $         \\  
TrajGAT               &   $19.80_{\pm 2.44} $   &   $25.80_{\pm 1.23} $   &   $30.57_{\pm 0.74} $      &   $44.03_{\pm 0.23} $    &   $66.69_{\pm 0.24} $          &   $17.30_{\pm 2.12} $   &   $21.20_{\pm 0.98} $   &   $25.87_{\pm 1.20} $  &   $37.90_{\pm 0.23} $    &   $61.91_{\pm 0.98} $         \\  
\hline
\textbf{ConvTraj}   &   $\textbf{46.17}_{\pm 2.26} $    &   $\textbf{57.73}_{\pm 0.22} $     &   $\textbf{63.69}_{\pm 0.37} $      &   $\textbf{76.12}_{\pm 0.03} $    &   $\textbf{95.20}_{\pm 0.00} $         &   $\textbf{51.80}_{\pm 0.93} $    &   $\textbf{62.73}_{\pm 0.56} $  &   $\textbf{68.86}_{\pm 0.33} $ &   $\textbf{79.52}_{\pm 0.12} $    &   $\textbf{97.34}_{\pm 0.01} $     \\ 
Gap with SOTA  &   $+11.64 $   &  $+15.42 $    & $+15.29 $      &   $+14.24 $    &   $+14.82 $          &   $+5.43 $   &   $+4.23 $   &     $+4.39 $  &   $+3.09 $    &  $+2.90 $       \\
\bottomrule
&\multicolumn{5}{c|}{\textbf{DTW}}                   & \multicolumn{5}{c}{\textbf{EDR}}                     \\
\hline
t2vec                  &   $25.30_{\pm 0.32} $   &   $26.70_{\pm 0.77} $   &   $28.91_{\pm 0.43} $      &   $32.81_{\pm 0.25} $    &   $55.40_{\pm 0.21} $          &   $18.70_{\pm 0.97} $   &   $18.34_{\pm 0.24} $   &   $20.95_{\pm 0.45} $  &   $\underline{25.56_{\pm 0.22}} $    &   $\underline{46.22_{\pm 0.10}} $         \\      
TrjSR                  &   $\underline{27.80_{\pm 0.13}} $   &   $\underline{32.64_{\pm 0.21}} $   &   $\underline{36.52_{\pm 0.09}} $      &   $40.74_{\pm 0.03} $    &   $\underline{67.91_{\pm 0.04}} $          &   $\underline{20.50_{\pm 0.34}} $   &   $18.26_{\pm 0.42} $   &   $19.83_{\pm 0.19} $  &   $23.74_{\pm 0.08} $    &   $45.52_{\pm 0.03} $         \\    
TrajCL                 &   $8.47_{\pm 0.38} $   &   $12.48_{\pm 0.14} $   &   $14.63_{\pm 0.02} $      &   $19.16_{\pm 0.05} $    &   $32.05_{\pm 0.08} $          &   $17.00_{\pm 0.89} $   &   $\underline{20.33_{\pm 0.15}} $   &   $\underline{22.74_{\pm 0.05}} $  &   $24.67_{\pm 0.18} $    &   $44.90_{\pm 0.34} $         \\  
NeuTraj              &   $25.13_{\pm 0.31} $   &   $30.57_{\pm 0.09} $   &   $33.68_{\pm 0.22} $      &   $\underline{41.72_{\pm 0.17}} $    &   $62.87_{\pm 0.56} $          &   $8.70_{\pm 16.81} $   &   $11.98_{\pm 16.48} $   &   $14.28_{\pm 19.85} $  &   $19.59_{\pm 14.78} $    &   $21.63_{\pm 10.99} $         \\ 
Traj2SimVec               &   $16.22_{\pm 0.84} $   &   $13.24_{\pm 0.48} $   &   $15.39_{\pm 0.75} $      &   $20.37_{\pm 0.09} $    &   $35.09_{\pm 0.05} $          &   $7.90_{\pm 0.95} $   &   $10.44_{\pm 0.64} $   &   $12.39_{\pm 0.70} $  &   $16.02_{\pm 0.11} $    &   $18.45_{\pm 0.01} $         \\  
TrajGAT               &   $13.80_{\pm 0.45} $   &   $19.98_{\pm 0.23} $   &   $24.77_{\pm 0.47} $      &   $35.26_{\pm 0.98} $    &   $59.36_{\pm 0.23} $          &   $13.40_{\pm 0.32} $   &   $15.58_{\pm 0.78} $   &   $17.84_{\pm 1.32} $  &   $22.49_{\pm 0.44} $    &   $36.78_{\pm 0.24} $         \\   
\hline
\textbf{ConvTraj}   &   $\textbf{31.70}_{\pm 0.72} $    &   $\textbf{41.56}_{\pm 0.38} $     &   $\textbf{46.46}_{\pm 0.67} $      &   $\textbf{59.26}_{\pm 0.44} $    &   $\textbf{83.70}_{\pm 0.02} $         &   $\textbf{25.96}_{\pm 1.87} $    &   $\textbf{26.95}_{\pm 2.53} $  &   $\textbf{28.64}_{\pm 2.93} $ &   $\textbf{30.75}_{\pm 1.40} $    &   $\textbf{54.93}_{\pm 4.93} $     \\
Gap with SOTA  &   $+3.90 $   &  $+8.92 $    & $+9.94 $      &   $+17.54 $    &   $+15.79 $          &   $+5.46 $   &   $+6.62 $   &     $+5.90 $  &   $+5.19 $    &  $+8.71 $       \\

\bottomrule
\end{tabular}%
}
\end{table*}

\begin{table*}[]
\caption{Embedding Results On Porto dataset (3 runs)}
\label{tab:metric-result-dtw-edr}
\resizebox{0.95\textwidth}{!}{%
\begin{tabular}{c|ccccc|ccccc}
\toprule
\multicolumn{11}{c}{\textbf{Porto}}                     \\
\hline
&\multicolumn{5}{c|}{\textbf{Hausdorff}}                   & \multicolumn{5}{c}{\textbf{DFD}}                     \\
                    Model   & HR@1 & HR@5 & HR@10 & HR@50 & R10@50 & HR@1 & HR@5 & HR@10 & HR@50 & R10@50 \\ \midrule
t2vec                  &   $4.00_{\pm 1.01} $   &   $5.88_{\pm 0.87} $   &   $7.28_{\pm 0.89} $      &   $10.46_{\pm 0.13} $    &   $17.08_{\pm 0.04} $          &   $5.20_{\pm 0.99} $   &   $6.28_{\pm 0.98} $   &   $7.66_{\pm 0.82} $  &   $11.09_{\pm 0.09} $    &   $17.84_{\pm 0.01} $         \\       
TrjSR                  &   $6.87_{\pm 0.12} $   &   $13.26_{\pm 0.74} $   &   $14.79_{\pm 0.23} $      &   $26.71_{\pm 0.25} $    &   $33.46_{\pm 0.02} $          &   $8.12_{\pm 0.30} $   &   $10.36_{\pm 0.32} $   &   $14.26_{\pm 0.64} $  &   $20.37_{\pm 0.03} $    &   $37.48_{\pm 0.11} $         \\    
TrajCL                 &   $7.07_{\pm 0.81} $   &   $12.55_{\pm 0.18} $   &   $15.37_{\pm 0.12} $      &   $23.47_{\pm 0.02} $    &   $35.37_{\pm 0.06} $          &   $7.07_{\pm 0.33} $   &   $14.08_{\pm 0.18} $   &   $18.01_{\pm 0.04} $  &   $28.31_{\pm 0.11} $    &   $42.97_{\pm 0.15} $         \\ 
NeuTraj              &   $\underline{9.30_{\pm 0.09}} $   &   $\underline{19.32_{\pm 0.01}} $   &   $\underline{24.07_{\pm 0.02}} $      &   $\underline{34.22_{\pm 0.00}} $    &   $\underline{51.43_{\pm 0.07}} $          &   $\underline{15.13_{\pm 0.44}} $   &   $\underline{27.71_{\pm 0.14}} $   &   $\underline{33.64_{\pm 0.09}} $  &   $\underline{45.47_{\pm 0.01}} $    &   $\underline{66.58_{\pm 0.14}} $         \\  
Traj2SimVec        &   $6.34_{\pm 0.84} $   &   $14.33_{\pm 0.95} $   &   $16.32_{\pm 0.27} $      &   $27.34_{\pm 0.22} $    &   $37.45_{\pm 0.02} $          &   $7.64_{\pm 0.34} $   &   $16.37_{\pm 0.32} $   &   $20.03_{\pm 0.06} $  &   $30.09_{\pm 0.02} $    &   $44.11_{\pm 0.22} $         \\  
TrajGAT               &   $6.48_{\pm 0.96} $   &   $16.48_{\pm 0.85} $   &   $18.29_{\pm 0.61} $      &   $22.31_{\pm 0.23} $    &   $43.58_{\pm 0.31} $          &   $8.39_{\pm 1.23} $   &   $13.49_{\pm 0.84} $   &   $20.38_{\pm 0.43} $  &   $24.31_{\pm 0.54} $    &   $39.78_{\pm 0.21} $         \\  
\hline
\textbf{ConvTraj}   &   $\textbf{15.33}_{\pm 2.11} $    &   $\textbf{27.68}_{\pm 0.09} $     &   $\textbf{33.27}_{\pm 0.13} $      &   $\textbf{45.98}_{\pm 0.11} $    &   $\textbf{67.20}_{\pm 0.04} $         &   $\textbf{22.73}_{\pm 0.46} $    &   $\textbf{34.97}_{\pm 0.21} $  &   $\textbf{40.59}_{\pm 0.06} $ &   $\textbf{53.35}_{\pm 0.18} $    &   $\textbf{77.33}_{\pm 0.59} $     \\ 
Gap with SOTA  &   $+6.03 $   &  $+8.36 $    & $+9.20 $      &   $+11.76 $    &   $+15.77 $          &   $+7.30 $   &   $+7.26 $   &     $+6.95 $  &   $+7.88 $    &  $+10.75 $       \\
\bottomrule
&\multicolumn{5}{c|}{\textbf{DTW}}                   & \multicolumn{5}{c}{\textbf{EDR}}                     \\
\hline
t2vec                  &   $6.80_{\pm 0.98} $   &   $8.16_{\pm 0.79} $   &   $9.60_{\pm 0.69} $      &   $12.53_{\pm 0.44} $    &   $20.92_{\pm 0.13} $          &   $3.00_{\pm 0.88} $   &   $4.24_{\pm 0.91} $   &   $4.76_{\pm 0.67} $  &   $7.77_{\pm 0.18} $    &   $12.30_{\pm 0.23} $         \\      
TrjSR                  &   $\underline{9.66_{\pm 1.05}} $   &   $\underline{15.44_{\pm 0.31}} $   &   $\underline{18.29_{\pm 0.48}} $      &   $21.08_{\pm 0.28} $    &   $\underline{38.23_{\pm 0.15}} $          &   $4.76_{\pm 0.63} $   &   $6.13_{\pm 0.17} $   &   $8.42_{\pm 0.13} $  &   $14.87_{\pm 0.23} $    &   $23.20_{\pm 0.55} $         \\  
TrajCL                 &   $0.53_{\pm 0.14} $   &   $1.51_{\pm 0.06} $   &   $2.33_{\pm 0.05} $      &   $4.56_{\pm 0.18} $    &   $6.96_{\pm 0.99} $          &   $\underline{5.73_{\pm 0.44}} $   &   $\underline{7.99_{\pm 0.01}} $   &   $\underline{10.47_{\pm 0.12}} $  &   $\underline{16.52_{\pm 0.12}} $    &   $\underline{25.53_{\pm 0.58}} $         \\  
NeuTraj              &   $8.40_{\pm 0.19} $   &   $13.48_{\pm 0.03} $   &   $16.33_{\pm 0.07} $      &   $\underline{22.98_{\pm 0.02}} $    &   $34.81_{\pm 0.17} $          &   $1.50_{\pm 0.81} $   &   $2.52_{\pm 0.19} $   &   $3.79_{\pm 0.12} $  &   $7.78_{\pm 0.06} $    &   $10.83_{\pm 0.15} $         \\  
Traj2SimVec               &   $7.26_{\pm 0.26} $   &   $10.44_{\pm 0.32} $   &   $13.22_{\pm 0.37} $      &   $15.30_{\pm 0.26} $    &   $18.30_{\pm 0.09} $          &   $1.20_{\pm 0.23} $   &   $1.59_{\pm 0.31} $   &   $1.85_{\pm 0.13} $  &   $4.96_{\pm 0.15} $    &   $7.33_{\pm 0.29} $         \\  
TrajGAT               &   $5.73_{\pm 0.11} $   &   $11.02_{\pm 0.23} $   &   $12.58_{\pm 0.08} $      &   $16.97_{\pm 0.07} $    &   $20.79_{\pm 0.01} $          &   $3.22_{\pm 0.23} $   &   $4.78_{\pm 0.67} $   &   $6.23_{\pm 0.02} $  &   $10.98_{\pm 0.04} $    &   $12.34_{\pm 0.01} $         \\    
\hline
\textbf{ConvTraj}   &   $\textbf{10.13}_{\pm 0.33} $    &   $\textbf{19.77}_{\pm 0.18} $     &   $\textbf{24.46}_{\pm 0.41} $      &   $\textbf{35.36}_{\pm 0.15} $    &   $\textbf{52.83}_{\pm 0.03} $         &   $\textbf{13.47}_{\pm 0.70} $    &   $\textbf{13.84}_{\pm 0.58} $  &   $\textbf{15.61}_{\pm 0.34} $ &   $\textbf{21.45}_{\pm 0.78} $    &   $\textbf{37.00}_{\pm 2.80} $     \\  
Gap with SOTA  &   $+0.47 $   &  $+4.33 $    & $+6.17 $      &   $+12.38 $    &   $+14.60 $          &   $+7.74 $   &   $+5.85 $   &     $+5.14 $  &   $+4.93 $    &  $+11.47 $       \\

\bottomrule
\end{tabular}%
}
\end{table*}

\begin{table*}[]
\caption{Embedding Results On Chengdu dataset}
\label{tab:metric-result-haus-dfd-chengdu}
\resizebox{0.95\textwidth}{!}{%
\begin{tabular}{c|ccc|ccc|ccc|ccc}
\toprule
\multicolumn{13}{c}{\textbf{Chengdu}}                     \\
\hline
&\multicolumn{3}{c|}{\textbf{Hausdorff}}                   & \multicolumn{3}{c|}{\textbf{DFD}}    &\multicolumn{3}{c|}{\textbf{DTW}}&\multicolumn{3}{c}{\textbf{EDR}}                 \\ 
                    Model   & HR@5 & HR@10 & R10@50 & HR@5 & HR@10 & R10@50 & HR@5 & HR@10 & R10@50 & HR@5 & HR@10 & R10@50\\ \midrule
t2vec                    &   $ 16.16$   &   $16.52$       &   $30.69$           &   $21.34$   &   $22.34$     &   $44.06$         &   $21.00$   &   $ 22.68$          &   $46.03$            &   $14.18$   &   $16.37$     &   $37.41$   \\   
TrjSR                     &   $10.44$   &   $12.09$         &   $25.44$           &   $11.08$   &   $12.77$     &   $25.96$         &   $16.42$   &   $18.32$          &   $36.63$            &   $\underline{18.80}$   &   $19.78$      &   $40.02$ \\ 
TrajCL                 &   $\underline{22.14} $   &   $\underline{27.04} $         &   $\underline{61.93} $           &   $19.00 $   &   $21.42 $      &   $51.79 $         &   $23.48 $   &   $27.20 $        &   $59.89 $             &   $17.40$   &   $20.53$ &   $\underline{48.58} $         \\
NeuTraj               &   $21.82$   &   $23.27$         &   $39.09$             &   $\underline{32.28}$   &   $\underline{34.70}$      &   $\underline{49.51}$     &   $\underline{28.40}$   &   $\underline{29.33}$          &   $46.90$             &   $10.44$   &   $11.54$      &   $24.65$    \\     
Traj2SimVec                 &   $18.34$   &   $20.17$        &   $43.67$             &   $25.16$   &   $28.33$   &   $42.78$      &   $18.66$   &   $19.37$          &   $40.69$            &   $ 6.93$   &   $ 8.31$     &   $19.18$\\
TrajGAT                 &   $19.98$   &   $25.26$        &   $57.57$             &   $16.24$   &   $19.11$   &   $49.18$      &   $23.96$   &   $28.32$          &   $\underline{65.41}$            &   $ 17.94$   &   $ \underline{20.55}$     &   $48.40$\\  
\hline
\textbf{ConvTraj}       &   $\textbf{36.26} $     &   $\textbf{42.78} $         &   $\textbf{76.67} $            &   $\textbf{53.34} $  &   $\textbf{58.18} $     &   $\textbf{93.14} $     &   $\textbf{34.90} $     &   $\textbf{40.40} $          &   $\textbf{76.23} $             &   $\textbf{21.50} $  &   $\textbf{25.34} $     &   $\textbf{55.21} $     \\
Gap with SOTA     &  $+14.12 $    & $+15.74 $         &   $+19.10 $           &   $+21.06 $   &     $+23.48 $      &  $+41.45 $       &  $+6.50 $    & $+11.07 $         &   $+10.82 $             &   $+2.70 $   &     $+4.81 $      &  $+6.63 $       \\
\bottomrule
\end{tabular}%
}
\end{table*}

\begin{table*}[h]
\caption{Embedding Results On TrajCL-Porto dataset}
\label{tab:trajcl-result-haus-dfd}
\resizebox{0.95\textwidth}{!}{%
\begin{tabular}{c|ccc|ccc|ccc|ccc}
\toprule
\multicolumn{13}{c}{\textbf{TrajCL-Porto}}                     \\
\hline
&\multicolumn{3}{c|}{\textbf{EDR}}                   & \multicolumn{3}{c}{\textbf{EDwP}} &\multicolumn{3}{c|}{\textbf{Hausdorff}}                   & \multicolumn{3}{c}{\textbf{DFD}}                     \\
                    Model   &  HR@5 & HR@20 & R5@20 & HR@5 & HR@20 & R5@20&  HR@5 & HR@20 & R5@20 & HR@5 & HR@20 & R5@20 \\ \midrule
t2vec                  &   $0.125 $   &   $0.164 $   &   $0.286 $      &   $0.399 $    &   $0.518 $          &   $0.751 $&   $0.405 $   &   $0.549 $   &   $0.770 $      &   $0.504 $    &   $0.651 $          &   $0.883 $   \\    
TrjSR                  &   $0.137 $   &   $0.147 $   &   $0.273 $      &   $0.271 $    &   $0.346 $          &   $0.535 $&   $0.541 $   &   $0.638 $   &   $0.880 $      &   $0.271 $    &   $0.356 $          &   $0.523 $   \\    
E2DTC~\cite{DBLP:conf/icde/FangDCHGC21}                  &   $0.122 $   &   $0.157 $   &   $0.272 $      &   $0.390 $    &   $0.514 $          &   $0.742 $&   $0.391 $   &   $0.537 $   &   $0.753 $      &   $0.498 $    &   $0.648 $          &   $0.879 $   \\    
CSTRM~\cite{DBLP:journals/comcom/LiuTGCZ22}                  &   $0.138 $   &   $0.191 $   &   $0.321 $      &   $0.415 $    &   $0.536 $          &   $0.753 $&   $0.459 $   &   $0.584 $   &   $0.813 $      &   $0.421 $    &   $0.557 $          &   $0.768 $   \\    
TrajCL                &   \underline{$0.172 $ }  &   $\textbf{0.222 }$   &  \underline{$ 0.376    $}   &  \underline{ $0.546 $   } &   \underline{$0.646 $ }         &   \underline{$0.881 $}&   $0.643 $   &   $0.721 $   &   $0.954 $      &   \underline{$0.618 $}    &   \underline{$0.740 $}          &   \underline{$0.955 $}   \\    
T3S~\cite{DBLP:conf/icde/YangW0Q0021}                    &   $0.140 $   &   \underline{$0.192 $}   &   $0.325 $      &   $0.377 $    &   $0.498 $          &   $0.702 $&   $0.329 $   &   $0.482 $   &   $0.668 $      &   $0.595 $    &   $0.728 $          &   $0.946 $   \\    
Traj2SimVec            &   $0.119 $   &   $0.163 $   &   $0.285 $      &   $0.172 $    &   $0.253 $          &   $0.390 $&   $0.339 $   &   $0.429 $   &   $0.543 $      &   $0.529 $    &   $0.664 $          &   $0.894 $   \\    
TrajGAT                &   $0.090 $   &   $0.102 $   &   $0.184 $      &   $0.201 $    &   $0.274 $          &   $0.469 $&   \underline{$0.686 $}   &   \underline{$0.740 $}   &   \underline{$0.969 $}      &   $0.362 $    &   $0.403 $          &   $0.704 $   \\    
\hline
\textbf{ConvTraj}      & $\textbf{0.292}$   &   $0.181 $   &   $\textbf{0.414} $      &   $\textbf{0.776 }$    &   $\textbf{0.826   }$        &  $\textbf{ 0.987 } $&   $\textbf{0.754}$   &   $\textbf{0.770 }$   &   $\textbf{0.983 }$      &   $\textbf{0.760 }$    &   $\textbf{0.786 }$          &   $\textbf{0.984 }$   \\    
Gap with SOTA   &   $+0.12 $   &   $-0.041 $   &   $+0.038 $      &   $+0.23 $    &   $+0.18 $          &   $+0.106 $&   $+0.068 $   &   $+0.03 $   &   $+0.014 $      &   $+0.142 $    &   $+0.046 $          &   $+0.029 $   \\    
\bottomrule
ConvTraj-1D CNN                  &   $0.230 $   &   $0.097 $   &   $0.279 $      &   $0.648 $    &   $0.685 $          &   $0.937 $&   $0.732 $   &   $0.757 $   &   $0.983 $      &   $0.736 $    &   $0.769 $          &   $0.978 $   \\
ConvTraj-2D CNN                  &   $0.285 $   &   $0.174 $   &   $0.387 $      &   $0.611 $    &   $0.586 $          &   $0.949 $&   $0.746 $   &   $0.769 $   &   $0.983 $      &   $0.565 $    &   $0.528 $          &   $0.908 $   \\    
\bottomrule
\end{tabular}%
}
\end{table*}

\textbf{Baselines.}
When we test on Geolife, Porto, and Chengdu, we follow existing works~\cite{DBLP:conf/kdd/YaoHDCHB22,DBLP:journals/corr/abs-2311-00960} and compare ConvTraj with six representative methods, including \textbf{t2vec~\cite{DBLP:conf/icde/LiZCJW18}} and \textbf{TrjSR~\cite{DBLP:conf/ijcnn/CaoTWWX21}} based on self-supervised learning; \textbf{NeuTraj~\cite{DBLP:conf/icde/YaoCZB19}}, \textbf{Traj2SimVec~\cite{DBLP:conf/ijcai/ZhangZJZSSW20}}, \textbf{TrajGAT~\cite{DBLP:conf/kdd/YaoHDCHB22}}, and \textbf{TrajCL~\cite{DBLP:conf/icde/Chang0LT23}} based on supervised learning. For the self-supervised method, since its goal is not to approximate the existing measurements, we thus perform the following steps to handle it. We first randomly select a part of the trajectory for pre-training (We select 10000 trajectories for Geolife and 200000 for Porto. Since TrajCL also needs pre-training, we will use these data to pre-train TrajCL). 
Then we add an MLP encoder in the end and fine-tune it with the triplet selection method and loss function in ~\autoref{sec-training-pipeline}. For those methods which have open-source code~\cite{DBLP:conf/icde/LiZCJW18,DBLP:conf/icde/Chang0LT23,DBLP:conf/icde/YaoCZB19,DBLP:conf/kdd/YaoHDCHB22, DBLP:conf/ijcnn/CaoTWWX21}, we directly use their implementation. For others ~\cite{DBLP:conf/ijcai/ZhangZJZSSW20}, we implement it based on the settings of its paper. In addition, since many baselines have been evaluated on the TrajCL-Porto and the results have been reported in \cite{DBLP:conf/icde/Chang0LT23}, we will directly compare our results with those of other baselines reported in \cite{DBLP:conf/icde/Chang0LT23}.

\textbf{Metrics.} We follow existing works~\cite{DBLP:conf/icde/YaoCZB19,DBLP:conf/ijcai/ZhangZJZSSW20, DBLP:conf/kdd/YaoHDCHB22} and evaluate the effectiveness of these methods using the task of $k$ nearest neighbor search. Specifically, we first use the top-$k$ hitting rate (HR@$k$), which is the overlap percentage of detection top-$k$ results with the ground truth. The second is the top-50 recall of the top-10 ground truth (R10@50), i.e. how many top 10 ground truths are recovered by the generated top 50 lists. The calculation of these two types of metrics is very close. For HR@$k$, we first find the top-$k$ most similar trajectories for each query in the candidate set. Then, for each query, trajectories in the candidate set are ranked according to their distance to the query in the embedding space. If the trajectories ranking top $k$ contain $k^\prime$ of the true top-$k$ neighbors, the HR@$k$ is $k^\prime/k$. 
For R10@50, we first find the top 10 most similar trajectories for each query in the candidate set. Then, for each query, trajectories in the candidate set are ranked according to their distance to the query in the embedding space. Similarly, if the trajectories ranking top 50 contain $k^\prime$ of the true top-10 neighbors, the R10@50 is $k^\prime/10$. These metrics can effectively evaluate whether the distance order in the embedding space is still preserved.

\textbf{Implementation Details.} 
% parameter setting
We set the MLP output dimension in 1D preprocessing to 16. Intuitively, as the grid width $\delta$ decreases, ConvTraj will perform better, but the training cost of the model will also increase significantly. We thus set $\delta$ as 250 meters when generating binary images. For the Geolife dataset, the number of residual blocks for 1D convolution is $n=12(\lfloor \log_2 7579 \rfloor)$, Porto is $n=11(\lfloor \log_2 3836 \rfloor)$, and TrajCl-Porto is $n=7(\lfloor \log_2 200 \rfloor)$.
% 超参
During training, we set the batch size to 128, the learning rate to $0.001$, and the embedding dimension to 128.
% 距离个数
We evaluated four common trajectory similarity measurements, Hausdorff, DFD, DTW, and EDR on Geolife and Porto, and evaluated Hausdorff, DFD, EDwP~\cite{DBLP:conf/icde/RanuPTDR15}, and EDR on TrajCL-Porto. 
% Due to space constraints, we only report the experimental results of Hausdorff and DFD in the main text and put the results of DTW and EDR in the appendix.
For each measurement on Geolife and Porto, we select three random seeds to repeat the experiment and report the average result and variance.
% machine
All experiments are conducted on a machine equipped with 36 CPU cores (Intel Core i9-10980XE CPU with 3.00GHz), 256 GB RAM, and a GeForce RTX 3090Ti GPU.

\begin{table*}[htb]
\caption{Efficiency Comparison}
\label{tab:efficiency}
\resizebox{0.95\textwidth}{!}{%
\begin{tabular}{c|c|cc|cc|cc|cc}
\toprule
& &\multicolumn{4}{c|}{\textbf{Geolife}}   & \multicolumn{4}{c}{\textbf{Porto}}   \\
\cline{3-10}
                    Method & $\#$ Paras
                    & \makecell{Pre-trained time \\ $t_{epoch}$ * ($\#$ epoch)}
                    & \makecell{Train time \\ $t_{epoch}$ * ($\#$ epoch)}
                    &\makecell{Train time \\ Per Epoch}
                    &\makecell{Inference \\ time}
                    &\makecell{Pre-trained time \\ $t_{epoch}$ * ($\#$ epoch)}
                    & \makecell{Train time \\$t_{epoch}$ * ($\#$ epoch)}
                    &\makecell{Train time \\ Per Epoch}
                    &\makecell{Inference \\ time}\\ \midrule

t2vec         &$2.86$M        &  $17.97$s * $10$  &  $0.27$s * $200$ & $18.24$s   &   $0.89$s     &   $328.12$s * $10$    &    $0.27$s * $200$   & $328.39$s  &  $61.64$s           \\    
TrjSR         & $\approx 40000$   &   $273.05$s * $3$   &  $0.27$s * $200$   &  $273.33$s & $\textbf{0.09}$s    &   $11800$s * $3$    &   $0.27$s * $200$      & $11800.27$s  &  $\textbf{11.69}$s    \\  
TrajCL       &$5.49$M               &   $14.03$s * $54$   &   $145.73$s * $30$   & $159.76$s & $11.42$s       &   $208.73$s * $75$     &   $52.14$s * $30$         & $260.87$s  &  $367.12$s      \\ 
NeuTraj        &$0.10$M         &    -    &    $149.13$s * $100$ & $149.13$s   &  $41.48$s       &   -     &   $230.29$s * $100$ & $230.29$s          &   $832.58$s      \\ 
TrajGAT            &$11.45$M       & -    &  $2613$s * $50$   & $2613$s&   $257.49$s    &   - &     $1843$s * $50$      &$1843$s   &  $4946.38$s   \\
\hline
ConvTraj &$0.13$M     &     -    &   $1.57$s * $200$  & $\textbf{1.57}$s  &    $0.41$s       &    -     &    $1.07$s * $200$       & $\textbf{1.07}$s    &   $28.53$s    \\ 
\bottomrule
\end{tabular}
}
\end{table*}

\subsection{Effectiveness}
% 观察到了什么现象：
% 1：主要实验结果是什么，其他对比算法在这些距离上没有表现的最好的，比如XXX，但是我们的不一样，我们的结果最好，好了多少，举一个例子，为什么
% 2：在所有的距离上表现都好，说明了泛化性
% 3：其他对比算法的表现，和一些其他论文里的表现类似，即XXX，原因是什么，
% 4：在porto大数据集上的表现  
\autoref{tab:metric-result-haus-dfd}, ~\autoref{tab:metric-result-dtw-edr}, and \autoref{tab:metric-result-haus-dfd-chengdu}
present an overview of the performance exhibited by different methods concerning the top-$k$ similarity search task on Geolife, Porto, and Chengdu, we can observe that: (1) On all datasets, ConvTraj significantly outperforms all methods on all metrics. Taking the Hausdorff distance on the Geolife as an example, compared with the state-of-the-art baseline NeuTraj, ConvTraj exceeds by more than 11\% in all metrics, with the largest improvement of 15.42\% for HR@5 and the smallest improvement of 11.64\% for HR@1. In addition, even for the Porto which contains 1.6 million trajectories, R10@50 has at least a 10.75\% improvement on four measurements.
This non-negligible improvement in performance is impressive given the fact that the sequence order features extracted by 1D convolution and the geographical distribution of the trajectory extracted by 2D convolution are both very beneficial to generating high-quality trajectory embedding representations. (2) The advantage of ConvTraj is evident in all measurements, which shows that ConvTraj is a general framework for different measurements. We can observe that no method can handle all measurements well. For example, NeuTraj performs best on the Hausdorff and DFD, while TrjSR and TrajCL have advantages on DTW and EDR respectively, which is also mutually verified with the results in ~\cite{DBLP:journals/corr/abs-2311-00960}. However, ConvTraj achieves state-of-the-art accuracy in all measurements. Compared to the state-of-the-art, ConvTraj achieves an average improvement of 10.22\%, 8.02\%, 7.59\%, and 7.03\% on all metrics of the Hausdorff, DFD, DTW, and EDR in Porto respectively. (3) We also noticed that compared with the results on the Geolife and Porto datasets, the TrajGAT method performed better on the Chengdu dataset. This may be because the longitude and latitude of the Chengdu dataset cover a larger area, so the quadtree-based modeling method of the TrajGAT is more effective.

\autoref{tab:trajcl-result-haus-dfd} presents the experimental results on TrajCL-Porto, we can observe that: (1) Similar to its performance on the Geolife and Porto, the ConvTraj method surpasses state-of-the-art in almost all metrics for four measurements. Compared with the state-of-the-art, Convtraj achieves improvements of 12\%, 23\%, 6.8\%, and 14.2\% on the HR@5 metrics of EDR, EDwP, Hausdorff, and DFD. (2) Even though both were tested on the Porto dataset, the performance gap between \autoref{tab:metric-result-dtw-edr} and \autoref{tab:trajcl-result-haus-dfd} is very large. For example, the HR@5 of the TrajCL and ConvTraj in \autoref{tab:trajcl-result-haus-dfd} on the DFD are 0.618 and 0.760 respectively, but in \autoref{tab:metric-result-dtw-edr} they are 0.141 and 0.349 respectively. The reason is that the TrajCL-Porto dataset contains fewer trajectories. When performing the top-$k$ similarity search task, the TrajCL-Porto dataset only has 2000 candidate trajectories. However, the Porto used in \autoref{tab:metric-result-haus-dfd} and \autoref{tab:metric-result-dtw-edr} contains 1598079 candidate trajectories, which results in a more comprehensive result. (3) We also evaluate the performance of ConvTraj using only 1D convolution (ConvTraj-1D CNN) or 2D convolution (ConvTraj-2D CNN) on TrajCL-Porto, and we can observe that ConvTraj's performance degrades after missing some features, but still has excellent performance.

\begin{table*}[htb]
\caption{Ablation Studies Results: The Role of 1D and 2D Convolution}
\label{tab:ablation-result}
\resizebox{0.95\textwidth}{!}{%
\begin{tabular}{c|c|ccc|ccc|ccc|ccc}
\toprule
& &\multicolumn{3}{c|}{\textbf{Haus}}   & \multicolumn{3}{c|}{\textbf{DFD}} & \multicolumn{3}{c}{\textbf{DTW}} & \multicolumn{3}{c}{\textbf{EDR}}   \\
\cline{3-14}
                     & Method & HR@10 & HR@50 & R10@50 & HR@10 & HR@50 & R10@50& HR@10 & HR@50 & R10@50& HR@10 & HR@50 & R10@50\\ \midrule
\multirow{3}{*}{Geolife}  & 1D CNN &  $38.89$     &    $56.90$   &     $79.01$ &  $62.51$     &    $76.44$   &     $95.50$&  $32.02$     &    $45.82$   &     $68.34$ &  $11.32$     &    $14.60$   &     $16.09$\\
                         & 2D CNN &  $57.97$     &    $72.11$   &     $92.49 $ &  $44.28$     &    $54.32$   &     $85.37 $ &  $35.46$     &    $45.19$   &     $75.22$&  $22.41$     &    $26.59$   &     $50.00$\\
                         & \textbf{1D + 2D} &    $\textbf{63.69}$     &    $\textbf{76.12}$   &     $\textbf{95.20}$&    $\textbf{68.86}$     &    $\textbf{79.52}$   &     $\textbf{97.34}$ &   $\textbf{46.46}$     &    $\textbf{59.26}$   &     $\textbf{83.70}$&    $\textbf{28.64}$     &    $\textbf{30.75}$   &     $\textbf{54.93}$\\
\hline
\multirow{3}{*}{Porto}  & 1D CNN &  $13.80$     &    $25.53$   &     $39.68$&  $10.30$     &    $17.86$   &     $32.02$&  $3.16$     &    $8.24$   &     $13.06$&  $12.58$     &    $14.45$   &     $24.86$\\
                         & 2D CNN  &  $28.46$     &    $40.72$   &     $60.04$&  $24.52$     &    $35.35$   &     $54.84$ &  $19.88$     &    $30.39$   &     $34.91 $&  $9.90$     &    $17.26$   &     $26.96$ \\
                         & \textbf{1D + 2D}  &    $\textbf{33.27}$     &    $\textbf{45.98}$   &     $\textbf{67.20}$&    $\textbf{40.59}$     &    $\textbf{53.35}$   &     $\textbf{77.33}$&   $\textbf{24.46}$     &    $\textbf{35.36}$   &     $\textbf{52.83}$&    $\textbf{15.61}$     &    $\textbf{21.45}$   &     $\textbf{37.00}$\\
\bottomrule
\end{tabular}
}
\end{table*}

\begin{table*}[htb]
\caption{Ablation Studies Results: Use LSTM to Replace 1D Convolution}
\label{tab:ablation-result-len}
\resizebox{0.95\textwidth}{!}{%
\begin{tabular}{c|c|ccc|ccc|ccc|ccc}
\toprule
& &\multicolumn{3}{c|}{\textbf{Hausdorff}}   & \multicolumn{3}{c|}{\textbf{DFD}} & \multicolumn{3}{c}{\textbf{DTW}} & \multicolumn{3}{c}{\textbf{EDR}}    \\
\cline{3-14}
                & Method & HR@10 & HR@50 & R10@50 & HR@10 & HR@50 & R10@50 & HR@10 & HR@50 & R10@50 & HR@10 & HR@50 & R10@50\\ \midrule
\multirow{3}{*}{Porto-S-10}  & 2D CNN&  $72.92$     &    $84.46$   &     $99.56$   &  $64.56$     &    $78.03$   &     $98.58$ &  $68.32$     &    $84.74$   &     $99.02$ &  $46.36$     &    $44.57$   &     $83.90$\\
                         & LSTM+2D&  $79.40$     &    $88.36$   &     $99.76$&  $82.92$     &    $88.77$   &     $\textbf{99.90}$ &  $\textbf{86.60}$     &    $\textbf{95.36}$   &     $\textbf{99.66}$ &  $\textbf{47.26}$     &    $\textbf{45.56}$   &     $84.56$ \\
                         & \textbf{1D+2D} &    $\textbf{80.40}$     &    $\textbf{88.64}$   &     $\textbf{99.78}$ &    $\textbf{83.02}$     &    $\textbf{88.84}$   &     $99.82$&    $86.20$     &    $95.35$   &     $\textbf{99.66}$ &    $47.24$     &    $45.51$   &     $\textbf{85.20}$\\
\hline
\multirow{3}{*}{Porto-S-70}  & 2D CNN &  $68.84$     &    $77.74$   &     $98.72$&  $46.70$     &    $53.44$   &     $90.32$&  $52.26$     &    $61.10$   &     $92.54$&  $30.34$     &    $32.14$   &     $59.40$\\
                         & LSTM+2D &  $69.32$     &    $77.72$   &     $98.80$&  $64.22$     &    $71.64$   &     $97.18$&  $69.28$     &    $79.05$   &     $98.22$&$34.68$     &    $36.21$   &     $63.64$\\
                         & \textbf{1D+2D}  &    $\textbf{72.22}$     &    $\textbf{79.67}$   &     $\textbf{99.24}$&    $\textbf{72.64}$     &    $\textbf{78.60}$   &     $\textbf{98.88}$&    $\textbf{72.24}$     &    $\textbf{82.53}$   &     $\textbf{98.84}$&    $\textbf{38.16}$     &    $\textbf{38.08}$   &     $\textbf{66.78}$\\
\hline
\multirow{3}{*}{Porto-S}  & 2D CNN &  $63.54$     &    $72.57$   &     $97.36$&  $27.36$     &    $34.37$   &     $68.08$&  $36.40$     &    $43.60$   &     $78.64$&  $21.34$     &    $24.40$   &     $46.86$\\
                         & LSTM+2D &  $64.10$     &    $73.22$   &     $\textbf{97.44}$&  $27.32$     &    $34.31$   &     $68.18$&  $37.64$     &    $43.87$   &     $78.10$&  $22.02$     &    $24.55$   &     $47.70$\\
                         & \textbf{1D+2D} &    $\textbf{65.04}$     &    $\textbf{73.76}$   &     $97.20$ &    $\textbf{58.22}$     &    $\textbf{68.20}$   &     $\textbf{94.70}$&    $\textbf{57.42}$     &    $\textbf{67.64}$   &     $\textbf{94.44}$ &    $\textbf{25.62}$     &    $\textbf{28.82}$   &     $\textbf{53.14}$ \\
\bottomrule
\end{tabular}
}
\end{table*}

\subsection{Efficiency}
% 介绍一下实验设置
% training time (epoch num * epoch times) + inference time
% 总的实验结果是怎么样的
% 分开几点细说：
% 1、neutraj的时间
% 2、参数量
% 3、那些pretrain的方法的时间太长，原因是XXX，我们则不用pretrain
We evaluate the efficiency of all baselines with open-source implementations on Geolife and Porto, and report the results in ~\autoref{tab:efficiency}. This includes network parameters, training time, and inference time. For methods that require pre-training, we also report their pre-training time. As illustrated, compared to existing RNN-based and Transformer-based methods, ConvTraj not only has fewer parameters (only 0.03M more than NeuTraj) but also has great advantages in training and inference speed. Taking the Porto with 1.6 million items as an example, compared with the most efficient Transformer-based model TrajCL, the training speed per epoch and the inference speed of ConvTraj are at least 243.80x and 12.87x faster respectively. Compared with the most efficient RNN-based model t2vec, the training speed per epoch and the inference speed of ConvTraj are at least 306.91x and 2.16x faster respectively. The reason for such a huge improvement is that compared to Transformer-based methods, ConvTraj has fewer parameters. Meanwhile, compared with RNN-based methods, although the parameters of ConvTraj are relatively large, the training and inference of ConvTraj are more efficient due to the inherent low parallelism of RNN. In addition, we also note that: (1) Compared with the CNN-based TrjSR, ConvTraj has no advantage in inference, but the training is faster because TrjSR requires pre-training on a large number of trajectories, and only uses fewer convolutional layers during inference, which also shows the superiority of CNN in terms of efficiency. (2) Although both t2vec and NeuTraj are based on RNN, and NeuTraj has fewer parameters, t2vec is more efficient. The reason is that NeuTraj needs to select more triplets during the training phase and compute spatial attention based on adjacent grids at each time step.

\subsection{Ablation Studies}
\subsubsection{The Role of 1D and 2D Convolution}
Our ConvTraj combines 1D and 2D convolutions, we thus conducted the following experiments to evaluate the contributions of each module: (1) 1D CNN. Using only 1D convolution features. (2) 2D CNN. Using only 2D convolution features. (3) 1D+2D. Using 1D and 2D convolution together. The results in ~\autoref{tab:ablation-result} show that for all measurements, neglecting any of these modules leads to a reduction in performance. In addition, we observe that 2D CNN outperforms most baselines, including TrjSR, which also uses 2D convolution. A similar conclusion can also be derived from \autoref{tab:trajcl-result-haus-dfd}. We explain that the goal of TrjSR is to reconstruct a high-resolution image from a low-resolution so that it can be as close as possible to the original image, thus the backbone and loss used are quite different from our 2D CNN. Furthermore, although we fine-tuned TrjSR, our 2D CNN is trained end-to-end and thus has more advantages.

\subsubsection{Use LSTM to Replace 1D Convolution}
In our ConvTraj, the role of 1D convolution is to capture the sequential features of trajectories. Although RNNs are commonly used for this purpose~\cite{DBLP:conf/icde/YaoCZB19}, we aim to demonstrate the important role of 1D convolution in ConvTraj by replacing it with an LSTM network. We will compare three methods to show the effectiveness of 1D convolution in capturing sequential features: (1) 2D CNN. Only using 2D convolution. (2) LSTM+2D. Using LSTM and 2D convolution together. (3) 1D+2D. Using 1D and 2D convolution together. The dataset used in this study is the same as \autoref{table:motivating-example}, called \textbf{Porto-S}, and the number of GPS points contained in each trajectory in Porto-S ranges from 104 to 888. In addition, we generated two more datasets, \textbf{Porto-S-10} and \textbf{Porto-S-70}, which contain the first 10 and 70 GPS points of each trajectory in Porto-S respectively, i.e., each trajectory in Porto-S-10 contains 10 GPS points, and each trajectory in Porto-S-70 contains 70 GPS points.

As shown in \autoref{tab:ablation-result-len}, the performance of LSTM+2D and 1D+2D is similar in the Porto-S-10, and LSTM+2D even performs slightly better than 1D+2D at some measurements (e.g., DTW and EDR), and both methods are significantly outperform than 2D CNN. These results show that LSTM performs very well in capturing sequential features when the trajectory contains fewer GPS points. However, as the number of points in a trajectory increases, the ability of LSTM to capture sequential features gradually decreases. For example, the HR@1 of 1D+2D and LSTM+2D on Porto-S-10 are 62.40\% and 63.20\% respectively for DTW. However, the HR@1 on Porto-S-70 are 52.40\% and 46.40\% respectively, and on Porto-S they are 38.60\% and 22.20\%. The gaps between them are -0.80\%, +6.0\%, +16.4\%, increasing progressively. Even LSTM+2D performs nearly as well as 2D CNN alone on Porto-S. These results show that RNNs struggle to capture the sequential features of trajectories with a large number of GPS points, whereas 1D CNNs do not exhibit this limitation.

\begin{table}[htb]
\caption{Employ the image generation strategy (Grayscale Image, i.e., GI) of TrjSR On Geolife}
\label{tab:ablation-result-trjsr}
\resizebox{\columnwidth}{!}{%
\begin{tabular}{c|c|cccc}
\toprule
& Method                   & HR@1  & HR@10 & HR@50 & R10@50  \\\midrule
\multirow{3}{*}{Haus} & \multirow{1}{*}{Binary Image} & $\textbf{46.17}$        & $\textbf{63.69}$& $\textbf{76.12}$& $\textbf{95.20}$    \\
                         & \multirow{1}{*}{GI (2D Avg-Pooling)} & $36.10$        & $56.49$& $72.70$& $91.53$   \\
                         & \multirow{1}{*}{GI (2D Max-Pooling)} & $42.50$        & $59.90$& $72.92$& $92.52$   \\
\cline{1-6}
\multirow{3}{*}{DFD} & \multirow{1}{*}{Binary Image} & $\textbf{51.80}$        & $\textbf{68.86}$& $\textbf{79.52}$& $\textbf{97.34}$    \\
                         & \multirow{1}{*}{GI (2D Avg-Pooling)} & $41.20$        & $63.28$& $77.74$& $95.71$   \\
                         & \multirow{1}{*}{GI (2D Max-Pooling)} & $50.20$        & $66.89$& $78.71$& $96.71$   \\
\cline{1-6}
\multirow{3}{*}{DTW} & \multirow{1}{*}{Binary Image} & $\textbf{31.70}$        & $\textbf{46.46}$& $\textbf{59.26}$& $\textbf{83.70}$    \\
                         & \multirow{1}{*}{GI (2D Avg-Pooling)} & $29.90$        & $45.03$& $58.21$& $82.02$   \\
                         & \multirow{1}{*}{GI (2D Max-Pooling)} & $31.50$        & $44.63$& $57.19$& $82.49$   \\
\cline{1-6}
\multirow{3}{*}{EDR} & \multirow{1}{*}{Binary Image} & $25.96$        & $28.64$& $30.75$& $54.93$    \\
                         & \multirow{1}{*}{GI (2D Avg-Pooling)} & $22.50$        & $25.44$& $28.07$& $50.59$   \\
                         & \multirow{1}{*}{GI (2D Max-Pooling)} & $\textbf{28.80}$        & $\textbf{33.25}$& $\textbf{33.45}$& $\textbf{62.45}$   \\
\bottomrule
\end{tabular}%
}
\end{table}

\begin{table}[htb]
\caption{Sensitivity of Embedding Results to $\delta$ On Geolife}
\label{tab:ablation-result-delta}
\resizebox{\columnwidth}{!}{%
\begin{tabular}{c|cc|cccc}
\toprule
&\makecell{Train time \\ Per Epoch}        & Method        & HR@1  & HR@10 & HR@50 & R10@50  \\\midrule
\multirow{3}{*}{Haus} & \multirow{1}{*}{9.86s} & \multirow{1}{*}{$\delta$ = 125m} & $\textbf{49.70}$        & $63.60$& $\textbf{76.58}$& $\textbf{95.85}$    \\
                         & \multirow{1}{*}{1.57s}& \multirow{1}{*}{$\delta$ = 250m} & $46.17$        & $\textbf{63.69}$& $76.12$& $95.20$   \\
                         & \multirow{1}{*}{1.19s}& \multirow{1}{*}{$\delta$ = 375m} & $45.20$        & $61.23$& $75.35$& $94.66$    \\
\cline{1-7}
\multirow{3}{*}{DFD} & \multirow{1}{*}{9.86s}& \multirow{1}{*}{$\delta$ = 125m} & $\textbf{52.70}$        & $\textbf{69.10}$& $\textbf{79.69}$& $\textbf{97.37}$    \\
                         & \multirow{1}{*}{1.57s}& \multirow{1}{*}{$\delta$ = 250m} & $51.80$        & $68.86$& $79.52$& $97.34$   \\
                         & \multirow{1}{*}{1.09s}& \multirow{1}{*}{$\delta$ = 375m} & $51.40$        & $67.15$& $79.00$& $97.15$    \\
\cline{1-7}
\multirow{3}{*}{DTW}& \multirow{1}{*}{9.86s} & \multirow{1}{*}{$\delta$ = 125m} & $\textbf{35.10}$        & $\textbf{47.04}$& $\textbf{59.65}$& $\textbf{84.86}$    \\
                         & \multirow{1}{*}{1.57s}& \multirow{1}{*}{$\delta$ = 250m} & $31.70$        & $46.46$& $59.26$& $83.70$   \\
                         & \multirow{1}{*}{1.09s}& \multirow{1}{*}{$\delta$ = 375m} & $30.60$        & $44.54$& $57.78$& $81.92$    \\
\cline{1-7}
\multirow{3}{*}{EDR} & \multirow{1}{*}{9.86s}& \multirow{1}{*}{$\delta$ = 125m} & $\textbf{28.70}$        & $\textbf{33.76}$& $\textbf{33.74}$& $\textbf{62.85}$    \\
                         & \multirow{1}{*}{1.57s}& \multirow{1}{*}{$\delta$ = 250m} & $25.96$        & $28.64$& $30.75$& $54.93$   \\
                         & \multirow{1}{*}{1.09s}& \multirow{1}{*}{$\delta$ = 375m} & $24.50$        & $27.65$& $29.23$& $54.93$    \\
\bottomrule
\end{tabular}%
}
\end{table}

\subsubsection{2D Image Construction}
In 2D convolution, we map the trajectory into a binary image. In this section, we conducted some ablation experiments using the image generation strategy in TrjSR. As shown in \autoref{tab:ablation-result-trjsr}, we can see that there is a degradation in the performance of the model when directly applying the image generation strategy used in TrjSR (GI with 2D Avg-Pooling) to ConvTraj. However, after replacing the average pooling in the 2D convolution with the max-pooling, its performance is close to ConvTraj and performs better in EDR distance. We explain that the reason is, for grayscale images of TrjSR, the more points imply the longer duration that the object stays in this grid. Thus, different pixel values can be used to capture the temporal property of the trajectory. However, using the same average pooling as in ConvTraj on this grayscale image will generate additional noise, which affects the model’s ability to capture the geographical distribution. At this time, using a max-pooling layer can extract the strongest features in the grayscale image.

In addition, we also tested the sensitivity of our ConvTraj performance to the hyperparameter width $\delta$. As shown in \autoref{tab:ablation-result-delta}, we can see that with the decreasing of $\delta$, the performance of ConvTraj on all four distance functions has improved. This phenomenon is consistent with expectations, more points fall into different grids as $\delta$ decreases, thereby increasing the resolution of the 2D images. However, as the resolution increases, we can see that the training cost of the model also increases significantly. We thus use 250m as the default parameter in the paper.

\subsubsection{Loss Function Ablation Studies} 
In our ConvTraj, the loss function is the combination of triplet loss and MSE loss, where the role of MSE loss is to scale and approximate the trajectory distance, and triplet loss is to capture the relative similarity between trajectories. In order to demonstrate the role of these two loss functions in training, we conducted an ablation study on these two functions on the Geolife dataset. As shown in \autoref{tab:ablation-result-loss}, we can clearly observe that after removing the triplet loss, all metrics have declined. However, after removing the MSE loss, the metrics of Hausdorff and DFD have declined, while the metrics of DTW and EDR have increased. We guess that the reason for this problem is that compared to Hausdorff and DFD, the range of distance values of DTW and EDR is relatively large. As shown in \autoref{tab:distancevalue}, the distance value ranges of Hausdorff and DFD are between 0\textasciitilde0.32 and 0\textasciitilde0.34 respectively, however, the value ranges of DTW and EDR are between 0\textasciitilde1289.4 and 0\textasciitilde6256.0. Such a huge gap may cause the MSE loss to encounter some problems during scaling. A more detailed discussion may be studied in future work.

\begin{table}[htb]
\caption{Loss Function Ablation Studies Results On Geolife}
\label{tab:ablation-result-loss}
\resizebox{\columnwidth}{!}{%
\begin{tabular}{c|c|cccc}
\toprule
& Method                   & HR@1  & HR@10 & HR@50 & R10@50  \\\midrule
\multirow{3}{*}{Haus} & \multirow{1}{*}{w/o Triplet} & $44.30$        & $62.15$& $74.11$& $93.95$    \\
                         & \multirow{1}{*}{w/o MSE} & $45.40$        & $62.07$& $73.77$& $93.86$   \\
                         & \multirow{1}{*}{Triplet + MSE} & $\textbf{46.17}$        & $\textbf{63.69}$& $\textbf{76.12}$& $\textbf{95.20}$    \\
\cline{1-6}
\multirow{3}{*}{DFD} & \multirow{1}{*}{w/o Triplet} & $49.60$        & $67.71$& $79.02$& $97.02$    \\
                         & \multirow{1}{*}{w/o MSE} & $49.80$        & $68.11$& $78.81$& $96.98$   \\
                         & \multirow{1}{*}{Triplet + MSE} & $\textbf{51.80}$        & $\textbf{68.86}$& $\textbf{79.52}$& $\textbf{97.34}$    \\
\cline{1-6}
\multirow{3}{*}{DTW} & \multirow{1}{*}{w/o Triplet} & $23.00$        & $31.00$& $40.89$& $59.21$    \\
                         & \multirow{1}{*}{w/o MSE} & $\textbf{34.00}$        & $\textbf{47.16}$& $58.05$& $83.67$   \\
                         & \multirow{1}{*}{Triplet + MSE} & $31.70$        & $46.46$& $\textbf{59.26}$& $\textbf{83.70}$    \\
\cline{1-6}
\multirow{3}{*}{EDR} & \multirow{1}{*}{w/o Triplet} & $25.80$        & $26.58$& $27.26$& $51.49$    \\
                         & \multirow{1}{*}{w/o MSE} & $\textbf{33.30}$        & $\textbf{38.11}$& $\textbf{37.55}$& $\textbf{66.66}$   \\
                         & \multirow{1}{*}{Triplet + MSE} & $25.96$        & $28.64$& $30.75$& $54.93$    \\
\bottomrule
\end{tabular}%
}
\end{table}

\begin{table}[h]
  \centering
  \caption{Trajectory Distance Value Range}
  \label{tab:distancevalue}
  \resizebox{\columnwidth}{!}{%
  \begin{tabular}{ccccc}
    \toprule
    Measurements  & Hausdorff & DFD & DTW & EDR \\ \midrule
    Distance Value Range     & 0\textasciitilde0.32 & 0\textasciitilde0.34 & 0\textasciitilde1289.4 & 0\textasciitilde6256.0  \\
    \bottomrule
\end{tabular}
}
\end{table}

\begin{figure*}[h]
  \centering
  \includegraphics[width=\textwidth]{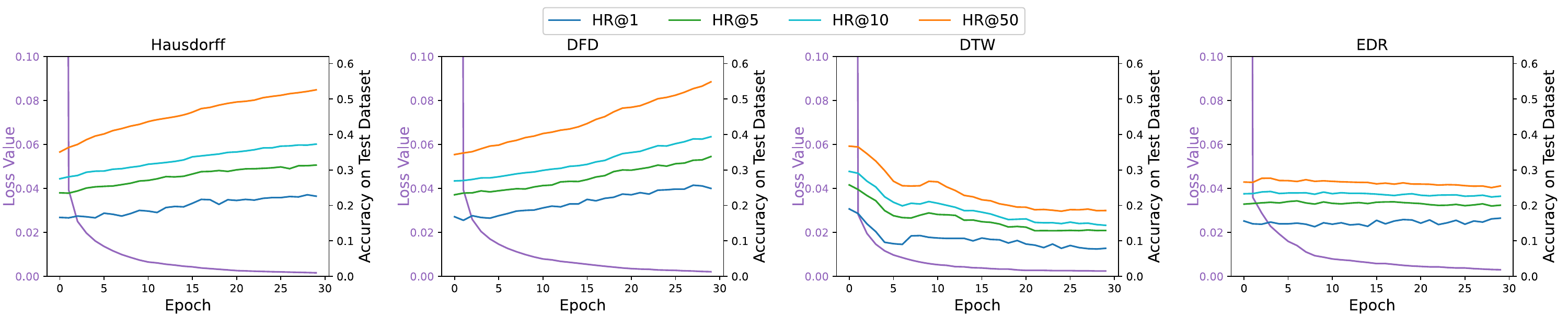}
  \caption{Training details of the TrajCL baseline training for 30 epochs. In this figure, the ordinate on the left represents the loss value of the model on the \textbf{training set}, and the ordinate on the right represents the performance of the model on the \textbf{test set} during the training process.}
  \label{fig:trajcl30}
\end{figure*}

\begin{figure*}[h]
  \centering
  \includegraphics[width=\textwidth]{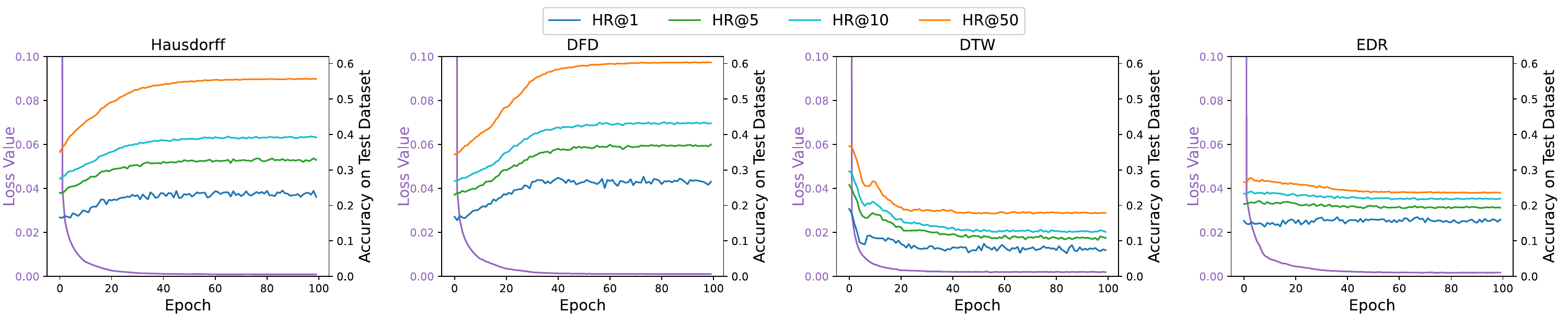}
  \caption{Training details of the TrajCL baseline training for 100 epochs}
  \label{fig:trajcl100}
\end{figure*}

\subsection{Training and Convergence Discussion}
\subsubsection{Motivation Experiment}

\begin{table}[htb]
\caption{The performance of the vanilla transformer under different training epochs}
\label{tab:more-epoch}
\resizebox{\columnwidth}{!}{%
\begin{tabular}{c|cc|cccc}
\toprule
Method & \makecell{Train time \\ Per Epoch}& \# Epoch                  & HR@1  & HR@5 & HR@10 & HR@50  \\\midrule
\multirow{3}{*}{global attention} & \multirow{1}{*}{17.28s} & \multirow{1}{*}{200} & $11.53$        & $21.49$& $26.46$& $35.85$    \\
                         & \multirow{1}{*}{17.28s} & \multirow{1}{*}{1000} & $22.10$ &$32.58$        & $39.11$& $50.11$   \\
                         & \multirow{1}{*}{17.28s} & \multirow{1}{*}{2000} & $23.10$ & $32.52$       & $41.59$& $55.15$   \\
\cline{1-7}
\multirow{3}{*}{\makecell{local attention \\ (w=10)}} & \multirow{1}{*}{17.28s} & \multirow{1}{*}{200} & $16.60$        & $26.47$& $31.89$& $41.40$    \\
                         & \multirow{1}{*}{17.28s} & \multirow{1}{*}{1000} & $23.20$ & $36.74$        & $42.80$& $54.70$   \\
                         & \multirow{1}{*}{17.28s} & \multirow{1}{*}{2000} & $22.80$ & $36.48$        & $43.89$& $57.34$   \\
\cline{1-7}
\multirow{3}{*}{\makecell{local attention \\ (w=5)}} & \multirow{1}{*}{17.28s} & \multirow{1}{*}{200} & $17.07$        & $26.75$& $31.82$& $41.53$    \\
                         & \multirow{1}{*}{17.28s} & \multirow{1}{*}{1000}& $21.80$& $35.40$        & $41.83$& $54.42$  \\
                         & \multirow{1}{*}{17.28s} & \multirow{1}{*}{2000} & $21.70$ & $34.36$        & $42.45$& $55.93$\\
\cline{1-7}
\multirow{1}{*}{1D CNN} & \multirow{1}{*}{1.03s} & \multirow{1}{*}{200}& $\textbf{33.23}$        & $\textbf{43.94}$& $\textbf{50.84}$& $\textbf{64.78}$    \\
\bottomrule
\end{tabular}%
}
\end{table}

Since the Transformer-based model has more parameters, we thus tested 200, 1000, and 2000 training epochs respectively. As shown in \autoref{tab:more-epoch}, we can observe that when the training epochs are increased from 200 to 1000, the Transformer-based model has a significant performance improvement (For example, the HR@1 of the method based on global attention increased from 11.53\% to 22.10\%). When the training epochs are increased from 1000 to 2000, the performance of the model does not change much, which indicates that the model training has reached convergence after 1000 epochs of training. However, we notice that even if the vanilla Transformer-based model reached convergence, it did not show an advantage over the 1D CNN that was only trained for 200 epochs. In addition, the overall training cost of the Transformer-based model was very high due to its larger number of parameters.

\begin{table}[htb]
\caption{The performance of the vanilla transformer under different training data}
\label{tab:more-data}
\resizebox{\columnwidth}{!}{%
\begin{tabular}{c|cc|cccc}
\toprule
Method & \makecell{Training \\ Data}& \# Epochs                  & HR@1  & HR@5 & HR@10 & HR@50  \\\midrule
\multirow{3}{*}{global attention} & \multirow{1}{*}{3000} & \multirow{1}{*}{2000} & $23.10$ & $32.52$       & $41.59$& $55.15$    \\
                         & \multirow{1}{*}{6000} & \multirow{1}{*}{2000} & $24.50$ & $35.02$       & $41.88$& $57.34$   \\
                         & \multirow{1}{*}{10000} & \multirow{1}{*}{2000} & $25.40$ & $36.22$       & $42.14$& $57.52$   \\
\cline{1-7}
\multirow{3}{*}{\makecell{local attention \\ (w=10)}} & \multirow{1}{*}{3000} & \multirow{1}{*}{2000} & $22.80$ & $36.48$        & $43.89$& $57.34$    \\
                         & \multirow{1}{*}{6000} & \multirow{1}{*}{2000} & $30.10$ & $41.38$        & $48.91$& $58.75$   \\
                         & \multirow{1}{*}{10000} & \multirow{1}{*}{2000} & $31.30$ & $42.22$        & $48.45$& $59.31$   \\
\cline{1-7}
\multirow{3}{*}{\makecell{local attention \\ (w=5)}} & \multirow{1}{*}{3000} & \multirow{1}{*}{2000} & $21.70$ & $34.36$        & $42.45$& $55.93$    \\
                         & \multirow{1}{*}{6000} & \multirow{1}{*}{2000} & $30.30$ & $41.58$        & $46.72$& $57.48$\\
                         & \multirow{1}{*}{10000} & \multirow{1}{*}{2000} & $30.80$ & $41.78$        & $47.95$& $57.42$\\
\cline{1-7}
\multirow{3}{*}{1D CNN} & \multirow{1}{*}{3000} & \multirow{1}{*}{200}& $33.23$        & $43.94$& $50.84$& $64.78$    \\
                        & \multirow{1}{*}{6000} & \multirow{1}{*}{200}& $34.40$        & $47.06$& $53.95$& $65.82$    \\
                        & \multirow{1}{*}{10000} & \multirow{1}{*}{200}& $\textbf{36.30}$        & $\textbf{48.40}$& $\textbf{55.46}$& $\textbf{67.90}$    \\
\bottomrule
\end{tabular}%
}
\end{table}

We also evaluated the performance of each model after increasing the number of trajectories used for training from 3000 to 6000, and 10000, and its results are shown in \autoref{tab:more-data}. We can see that the performance of the Transformer-based models has improved after increasing the number of trajectories used for training from 3000 to 6000 (For example, the HR@1 of the method based on global attention increased from 23.10\% to 24.50\%), but increasing the training data to 10000 does not significantly improve the performance. In addition, the performance of 1D CNN-based methods has also improved after the increase in training data. The vanilla Transformer-based method does not show a significant advantage when the training data is increased.

\begin{table}[]
\caption{The performance of the TrajCL}
\label{tab:trajcl}
\resizebox{\columnwidth}{!}{%
\begin{tabular}{c|c|cccc}
\toprule
Measurement & \# Epoch                  & HR@1  & HR@5 & HR@10 & HR@50  \\\midrule
\multirow{2}{*}{Hausdorff} & \multirow{1}{*}{30} & $22.03$ & $31.08$       & $37.21$& $52.49$    \\
                         & \multirow{1}{*}{100} & $\textbf{22.40}$ & $\textbf{32.82}$       & $\textbf{39.20}$& $\textbf{55.66}$   \\
\cline{1-6}
\multirow{2}{*}{DFD} & \multirow{1}{*}{30} & $25.40$ & $33.51$       & $38.98$& $54.72$    \\
                         & \multirow{1}{*}{100} & $\textbf{26.70}$ & $\textbf{37.16}$       & $\textbf{43.17}$& $\textbf{60.32}$   \\
\cline{1-6}
\multirow{2}{*}{DTW} & \multirow{1}{*}{30} & $\textbf{8.47}$ & $\textbf{12.48}$       & $\textbf{14.63}$& $\textbf{19.16}$    \\
                         & \multirow{1}{*}{100} & $7.40$ & $10.82$       & $12.54$& $17.88$   \\
\cline{1-6}
\multirow{2}{*}{EDR} & \multirow{1}{*}{30} & $\textbf{17.00}$ & $\textbf{20.33}$       & $\textbf{22.74}$& $\textbf{24.67}$    \\
                         & \multirow{1}{*}{100} & $16.00$ & $19.34$       & $21.91$& $23.58$   \\
\bottomrule
\end{tabular}%
}
\end{table}

\subsubsection{The Training Details of TrajCL}
Since the baseline of TrajCL performs best among all current Transformer-based baselines, we thus added more details about the training and convergence of TrajCL in the following. 

In our above experiments, in order to make the comparison as fair as possible, the report results based on the TrajCL are all based on its default open-source settings. For the Geolife dataset, we first pre-train TrajCL with 10000 trajectories and use the default training epoch of 100 in its open-source code. We then fine-tune TrajCL with the ground truth, with the default training epoch of 30 in its open-source code. As shown in \autoref{fig:trajcl30}, we can see that for Hausdorff and DFD, the model does not seem to have converged at this time, but for DTW and EDR distance, the model has overfitted.

We thus increased the number of training epochs from 30 to 100. The training details are shown in \autoref{fig:trajcl100}. We can observe that the model has converged after 100 epochs of training. We report the performance of TrajCL after 30 and 100 epochs of training in \autoref{tab:trajcl}. We can see that the performance of Hausdorff and DFD increases after 100 epochs of training, but the performance of DTW and EDR decreases after 100 epochs of training. In addition, this method of determining model parameters cannot be applied in practice because we cannot leak the test set during training, the example here is just for illustration.

\section{Conclusion}
In this paper, we argue that trajectory similarity learning should pay more attention to local similarity. Then we propose a simple CNN-based framework ConvTraj. Some theoretical analysis is conducted to help justify the effectiveness of ConvTraj. Extensive experiments on four real-world datasets show the superiority of ConvTraj.

% \begin{acks}
%  This work was supported by the [...] Research Fund of [...] (Number [...]). Additional funding was provided by [...] and [...]. We also thank [...] for contributing [...].
% \end{acks}

%\clearpage

\bibliographystyle{ACM-Reference-Format}
\bibliography{sample}

\end{document}